\definecolor{linkcolor}{RGB}{83,83,182}
\definecolor{citecolor}{RGB}{128,0,128}
\newcommand{\cmark}{\textcolor{olive}{\ding{51}}}%
\newcommand{\xmark}{\textcolor{red}{\ding{55}}}%
\newcommand{\cP}{\mathcal{P}}
\newcommand{\cS}{\mathcal{S}}
\newcommand{\cJ}{\mathcal{J}}
\newcommand{\cC}{\mathcal{C}}
\newcommand{\bigo}{\mathcal{O}}
\newcommand{\bbR}{\mathbb{R}}
\newcommand{\bbN}{\mathbb{N}}
\newcommand{\normin}[1]{ \lVert {#1} \rVert}
\newcommand{\norm}[1]{ \left\lVert {#1} \right\rVert}
\newcommand{\abs}[1]{ \lvert {#1} \rvert}
\newcommand{\eqdef}{\triangleq}
\newcommand{\prox}{\mathrm{prox}}
\newcommand{\pluseq}{\mathrel{+}=}
\newcommand{\diveq}{\mathrel{/}=}
\DeclareMathOperator{\dist}{dist}
\DeclareMathOperator{\gsupp}{gsupp}
\DeclareMathOperator{\Id}{Id}
\DeclareMathOperator{\Span}{Span}
\DeclareMathOperator*{\argmin}{argmin\,}
\DeclareMathOperator*{\argmax}{argmax\,}
\DeclareMathOperator{\MCP}{MCP}
\newtheorem{theorem}{Theorem}
\newtheorem{assumption}[theorem]{Assumption}
\newtheorem{definition}[theorem]{Definition}
\newtheorem{lemma}[theorem]{Lemma}
\newtheorem{proposition}[theorem]{Proposition}
\newtheorem{example}{Example}
\newtheorem{remark}[theorem]{Remark}
\newcommand{\ie}{{\em i.e.,~}}
\newaliascnt{problem}{equation}
\def\endproblem{\eqno \hbox{\@eqnnum}$$\@ignoretrue}
\crefname{model}{Model}{Models}
\Crefname{problem}{Problem}{Problems}
\crefname{problem}{Pb.}{Pbs.}
\crefname{algorithm}{Algorithm}{Algorithms}
\crefname{figure}{Figure}{Figures}
\crefname{proposition}{Proposition}{Propositions}
\crefname{appendix}{Appendix}{Appendix}
\crefname{assumption}{Assumption}{Assumptions}
\newlist{lemmaenum}{enumerate}{1} % also creates a counter called 'lemmaenum'
\setlist[lemmaenum]{label=\emph{\roman*)}, ref=\thetheorem~\emph{\roman*)}}
\newlist{thmenum}{enumerate}{1} % also creates a counter called 'lemmaenum'
\setlist[thmenum]{label=\emph{\roman*)}, ref=\thetheorem~\emph{\roman*)}}
\title{Beyond L1: Faster and Better Sparse Models with skglm}
\author{%
Quentin Bertrand
  % \thanks{Use footnote for providing further information
  %     about author (webpage, alternative address)---\emph{not} for acknowledging
  %     funding agencies.} \\
  \\
  Mila \& UdeM, Canada
  % Cranberry-Lemon University\\
  % Pittsburgh, PA 15213 \\
  \\
  \texttt{quentin.bertrand@mila.quebec} \\
  % examples of more authors
  \And
  Quentin Klopfenstein \\
  Luxembourg Centre for Systems Biomedicine
  \\
   University of Luxembourg\\
  Esch-sur-Alzette, Luxembourg  \\
  % \texttt{email} \\
  \AND
  Pierre-Antoine Bannier \\
  Independent Researcher \\
  % Address \\
  % \texttt{email} \\
  \And
  Gauthier Gidel \\
  Mila \& UdeM, Canada \\
  Canada CIFAR AI Chair \\
  % \texttt{email} \\
  \And
  Mathurin Massias \\
  Univ. Lyon, Inria, CNRS, ENS de Lyon,
  \\
  UCB Lyon 1, LIP UMR 5668, F-69342 \\
  Lyon, France \\
  % \texttt{mathurin.massias@gmail.com} \\
}
\begin{document}

\maketitle

\begin{abstract}
  We propose a new fast algorithm to estimate any sparse generalized linear model with convex or non-convex separable penalties.
  Our algorithm is able to solve problems with millions of samples and features in seconds, by relying on coordinate descent, working sets and Anderson acceleration.
  It handles previously unaddressed models, and
  is extensively shown to improve state-of-art algorithms.
  We release \texttt{skglm}, a flexible, \texttt{scikit-learn} compatible package, which easily handles customized datafits and penalties.
\end{abstract}

%! TEX root=../main.tex

%%%%%%%%%%%%%%%%%%%%%%%%%%%%%%%%%%%%%%%%%%%%%%%%%%
\section{Introduction}
%%%%%%%%%%%%%%%%%%%%%%%%%%%%%%%%%%%%%%%%%%%%%%%%%%
%
Sparse generalized linear models play a central role in modern machine learning and signal processing.
The Lasso \citep{Tibshirani96} and its derivatives
\citep{Zou_Hastie05,Ng04,Candes_Wakin_Boyd2008,Simon_Friedman_Hastie_Tibshirani12}
have found numerous successful applications to large scale tasks in genomics \citep{ghosh2005classification}, vision \citep{Mairal}, or neurosciences \citep{Strohmeier_Bekhti_Haueisen_Gramfort2016}.
This impact was made possible by two key factors: efficient algorithms and software implementations.

State-of-the-art algorithms for ``smooth + non-smooth separable'' problems predominantly rely on coordinate descent (CD, \citealt{tseng2009coordinate,Nesterov2012}),
which, when it can be applied, is more efficient than full gradient methods \citep[Sec. 6.1]{Richtarik_Takac2014}.
Coordinate descent can even be improved with Nesterov-like acceleration, to obtain improved convergence rates \citep{Lin_Lu_Xiao_2014,Fercoq_Richtarik2015}.
However, these better rates may fail to reflect in practical accelerations.
On the contrary, \citet{Bertrand_Massias2020} relied on Anderson acceleration \citep{Anderson65} to provide both better rates and practical acceleration for coordinate descent.

Even with efficient algorithms such as coordinate descent,
the practical use of sparsity hits a computational barrier for problems with more than millions of features \citep{lemorvan2018whinter}.
Multiple techniques have been proposed to make coordinate descent scale to  huge problems.
Notably, algorithms can be accelerated by reducing the number of variables to optimize over, using screening rules or working sets.
Screening rules discard features from the problem in advance (\citealt{Ghaoui_Viallon_Rabbani2010,Bonnefoy_Emiya_Ralaivola_Gribonval15})
or dynamically \citep{Fercoq_Gramfort_Salmon2015,Ndiaye_Fercoq_Gramfort_Salmon17}.
On the other side, working sets \citep{Johnson_Guestrin15,Massias_Gramfort_Salmon2018} iteratively solve larger subproblems and progressively include variables identified as relevant.

For the Lasso and a few convex models, coordinate descent has been broadly disseminated to practitioners in off-the-shelf packages such as \texttt{glmnet} \citep{Friedman_Hastie_Hofling_Tibshirani07} or \texttt{scikit-learn} \citep{Pedregosa_etal11}. %or \texttt{lightning} \citep{Blondel_Pedregosa2016}.
More recently, \texttt{celer}, a state-of-the-art convex working set algorithm \citep{Massias_Vaiter_Salmon_Gramfort2019} allowed for successful applications of the Lasso in large scale  problems in medicine \citep{Reidenbach_Lal_Slim_Mosafi_Israeli2021,kim2021nonlinear} or seismology \citep{Muir_Zhan2021}.

\begin{figure}[tb]
        \centering
        \includegraphics[width=0.7\columnwidth]{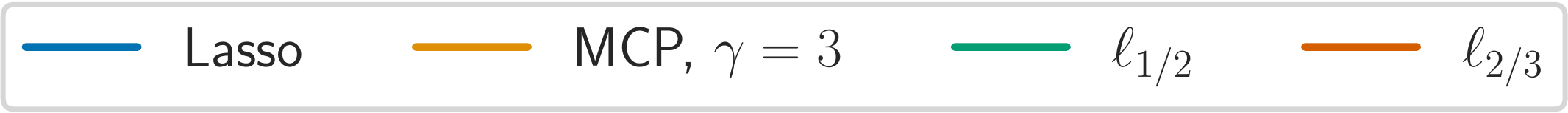}
        \includegraphics[width=0.7\columnwidth]{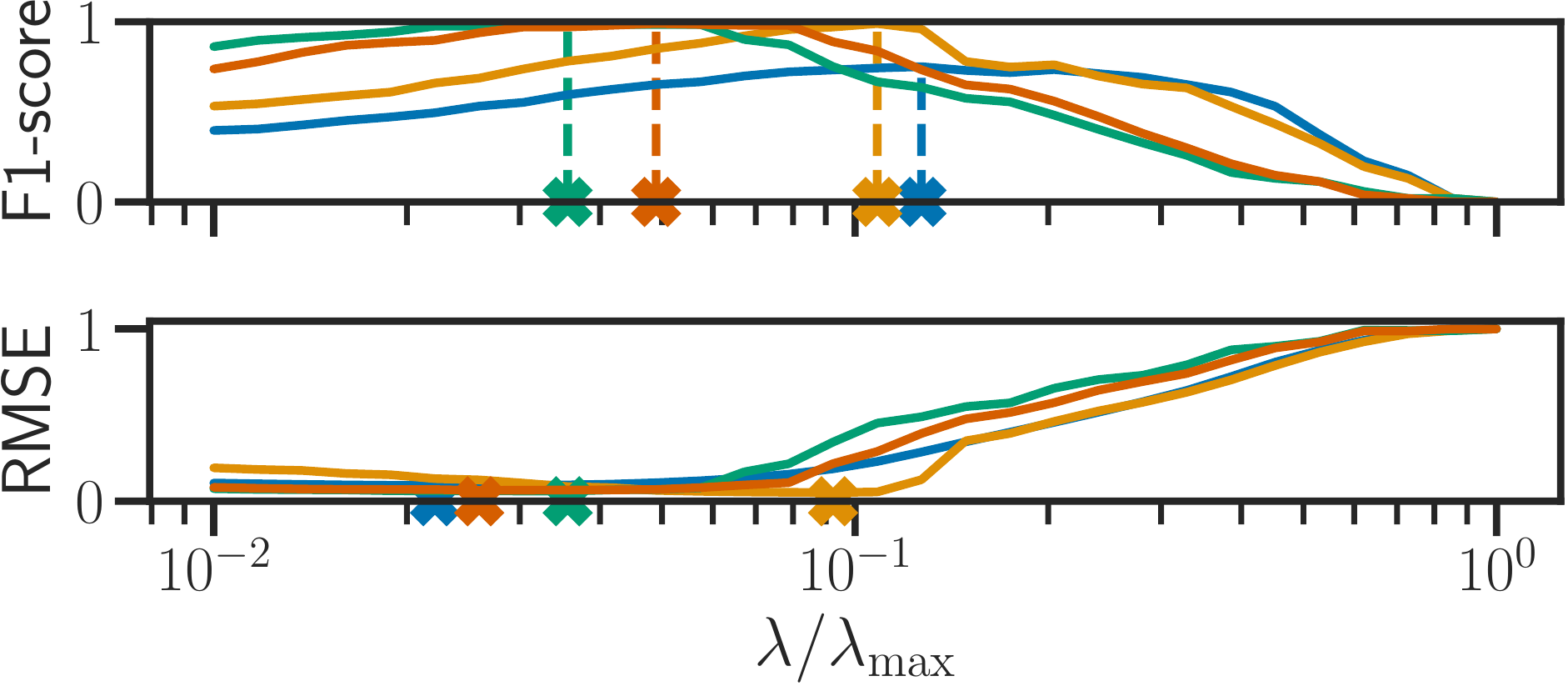}

        \caption{\textbf{Regularization paths computed with our algorithm.}
        Non-convex sparse penalties behave better than the L1 norm. Due to their lower bias, they achieve perfect support recovery, lower prediction error and their optimal regularization strength $\lambda$ in estimation (top) and prediction (bottom) correspond.
        }
        \label{fig:sparse_recovery}
\end{figure}

Yet the Lasso is limited: non-convex sparse models enjoy better theoretical and empirical properties \citep{Breheny_Huang2011,Soubies_BlancFeraud_Aubert2015}.
As illustrated in \Cref{fig:sparse_recovery}, they yield sparser solutions than convex penalties and mitigate the intrinsic Lasso bias.
Yet, they have not so often been applied to huge scale applications.
This is mostly an algorithmic barrier: while coordinate descent can be applied to non-convex penalties \citep{Breheny_Huang2011,Mazumder_Friedman_Hastie2011,Bolte_Sabach_Teboulle2014}, screening rules and working sets are heavily dependent on convexity or quadratic datafits \citep{Rakotomamonjy_Gasso_Salmon19,Rakotomamonjy_Flamary_Gasso_Salmon20}.

In this work, we solve this issue by designing a \textbf{state-of-the-art generic algorithm} to solve a wide range of sparse generalized linear models.
The contributions are the following:
\begin{itemize}[itemsep=0pt,topsep=-1pt]
    \item We propose a non-convex converging working set algorithm relying on Anderson accelerated coordinate descent.
    For a specific class of non-convex penalties, %($\alpha$-semi-convex penalties, \Cref{ass:alpha_semi_cvx})
    we show:
    \begin{enumerate}[itemjoin = \quad,topsep=0pt,parsep=0pt,partopsep=0pt, leftmargin=*,label=(\alph*)]
        % \item Having interpretable scores depicting a more fined grained picture of the characteristic of each player.
        \item Convergence of the proposed working set algorithm (\Cref{prop:conv_ws}).
        \item Support identification of coordinate descent (\Cref{prop:finite_identification}).
        \item Local convergence rates for the Anderson extrapolation (\Cref{prop:acc_lin_conv}).
    \end{enumerate}
    % To our knowledge this is the first convergence rate analysis of
    \item We provide an extensive experimental comparison and
    we show state-of-the-art improvements on a wide range of convex and non-convex problems.
    In addition we release an efficient and modular python implementation, with a \texttt{scikit-learn} API, for practitioners to apply non-convex penalties to large scale problems.
\end{itemize}
% On the practical side,
% \footnote{\url{https://github.com/mathurinm/skglm}.}
% of the proposed algorithm.
% This makes it trivial for practitioners to handle any model within our framework, through a \texttt{scikit-learn} API.

%! TEX root=../main.tex

%%%%%%%%%%%%%%%%%%%%%%%%%%%%%%%%
\section{Framework and proposed algorithm}
%%%%%%%%%%%%%%%%%%%%%%%%%%%%%%%%
%%%%%%%%%%%%%%%%%%%%%%%%%%%%%%%%
\subsection{Problem setting}
%%%%%%%%%%%%%%%%%%%%%%%%%%%%%%%%
%
In this paper, we consider problems of the form:
%
% \vspace*{-3mm}
\begin{problem}\label{pb:generic_pb}
    \hat
    \beta
    \in
    \argmin_{\beta \in \bbR^p}
    \Phi(\beta)
    \eqdef
    \underbrace{F(X\beta)}_{\eqdef f(\beta)}
    + \sum_{j=1}^p g_j(\beta_j)
    \enspace ,
\end{problem}
where $F$ is smooth, and the functions $g_j$ are proper and lower semicontinuous but not necessarily convex, whose proximal operator can be computed exactly.
We write $g = \sum_j g_j$.
Instances of \Cref{pb:generic_pb} include convex estimators: the Lasso, the elastic net, the sparse logistic regression, the dual of SVM with hinge loss.
They also include non-convex penalties:
$\ell_{0.5}$ and $\ell_{2/3}$ penalties \citep{Foucart_Lai2009},
the minimax concave penalty (MCP, \citealt{Zhang2010}) or SCAD \citep{Zhang2010}, both with regression and classification losses.
Formally, the assumptions are the following.
\begin{assumption}\label{ass:f}
    $f : \bbR^p \to \bbR$ is convex and differentiable and for all $j \in [p]$, the restriction of $\nabla_j f$ to the $j$-th coordinate is $L_j$-Lipschitz: for all $(x, h) \in \bbR^p \times \bbR$,
    % \begin{align}
        $| \nabla_j f(x + h e_j) - \nabla_j f (x) |
        \leq
        L_j \abs{h}$.
        % \enspace .
    % \end{align}
\end{assumption}
\begin{assumption}\label{ass:g}
    For any $j \in [p], g_j: \bbR \to \bbR$ is proper, closed, and lower bounded. % MM: removed lsc, closed means lsc.
\end{assumption}
Following \citealt{Attouch_Bolte2009,Bolte_Sabach_Teboulle2014}
we focus on finding a critical point of $\Phi$.
\begin{definition}\label{def:critical_pt}
    Using the Fréchet subdifferential \citep{Kruger2003}, a critical point $x \in \bbR^p$ is a point which satisfies
    % \begin{align} \label{eq:optcond}
        $- \nabla f (x) \in \partial g (x) .$
    % \end{align}
\end{definition}
\Cref{ass:f,ass:g} are usual, and, under boundedness of the iterates, ensure convergence of forward-backward and coordinate descent algorithms to a critical point (\citealt[Thm 5.1]{attouch2013convergence}, \citealt[Thm. 3.1]{Bolte_Sabach_Teboulle2014}).
In addition, our work focuses on the case where $g_j$'s present non-differentiability points, leading to the following extended notion of sparsity.
\begin{definition}[{Generalized support}]\label{def:generalized_supp}
    The \emph{generalized support} %$\cS_{\beta} \subseteq [p]$
    of $\beta \in \bbR^p$ is the set of indices $j \in [p]$ such that $g_j$ is differentiable at $\beta_j$:
        %
    % \begin{align*}
        % \cS =
        $\gsupp(\beta) = \{j \in [p]: \partial g_j(\beta_j) \, \mathrm{is \; a \; singleton}\} .$
    % \end{align*}
\end{definition}%
% \begin{example}
%     With $\lambda, \gamma >0$, MCP is defined as:
%     $\MCP_{\lambda, \gamma}(x) \mapsto
%     \begin{cases}
%         \lambda \abs{x} - \frac{x^2}{2 \gamma} \enspace,
%         &\text{if}\: \abs{x} \leq \gamma \lambda \\
%          \frac{1}{2} \gamma \lambda^2 \enspace,
%         &\text{if}\: \abs{x} > \gamma \lambda
%     \end{cases}\enspace .$
% \end{example}
Penalties such as $\ell_1$, $\ell_q$ ($0 < q < 1)$, MCP or SCAD are only not differentiable at 0, and this corresponds to the usual notion of sparsity.
But \Cref{def:generalized_supp} goes beyond sparsity and extends to estimators such as SVM, where $g_j = \iota_{[0, C]}$ and the generalized support is the complement of the support vectors' set $\{j \in [p] : \beta_j = 0 \text{ or } \beta_j = C  \}$.
The generalized support of a critical point is usually of cardinality much smaller than $p$, and its knowledge makes the problem easier and faster to solve.
% \mathurin{Add an illustrative sentence : For the SVM, fixing the value of coordinates at 0 or C reduces the number of variables in the problem... ?}
Our working set algorithm exploits this structure in order to converge faster.
%
%%%%%%%%%%%%%%%%%%%%%%%%%%%%%%%%%%%%%%%%%%%%%%%%
\subsection{Proposed algorithm}
%%%%%%%%%%%%%%%%%%%%%%%%%%%%%%%%%%%%%%%%%%%%%%%%
%
The proposed algorithm exploits two main ideas:
\begin{itemize}[itemsep=-2pt,topsep=0pt]
    \item A working set strategy, able to handle a large class of convex and non-convex penalties (\Cref{alg:skglm}).
    \item An Anderson accelerated coordinate descent for non-convex problems (\Cref{alg:subproblem}).
    The building blocks of \Cref{alg:subproblem}, coordinate descent (\texttt{CD}, \Cref{alg:cd_epoch}) and Anderson extrapolation (\texttt{Anderson}, \Cref{alg:extrapolation}), can be found in \Cref{app:algorithms}.
\end{itemize}

To avoid wasting computation on features outside the generalized support, working set algorithms iteratively select a subset of coordinates deemed important (the \emph{working set}), and solve \Cref{pb:generic_pb} restricted to them.
The key question is thus the notion of \emph{important}  features.
% There are two types of working set approaches in the literature:
% either selecting all the features breaking a given condition (no control on working set size), or selecting a subset of it, controlling the growth of the subproblems solved.
% %
% The first approach is for example used in strong rules \citep{tibshirani2012strong} or active warm start \citep{Ndiaye_Fercoq_Gramfort_Salmon17}:
% the problem is first solved on a subset of features, then if optimality condition \eqref{eq:optcond} is not satisfied, on all the features.
% Note that in this approach only two subproblems are defined, and the last one is the full problem.
% %
% More efficiently, the second approach, used e.g. in \texttt{blitz} \citep{Johnson_Guestrin15} and \texttt{celer} \citep{Massias_Gramfort_Salmon2018}, assigns a score to each feature to rank them (\Cref{alg:skglm}, line 2).
% Then only a subset of these features are selected as the working set (\Cref{alg:skglm}, line 3).
Stemming from \Cref{def:critical_pt}, we rank features by their violation of the optimality condition:
%
% \begin{equation}\label{eq:score_features}
    $\mathrm{score}_j^\partial =  \dist(-\nabla_j f (\beta), \partial g_j (\beta))
    \enspace .$
% \end{equation}
%
For example, the MCP Fréchet subdifferential at 0 is $\partial g_j (0) = [-\lambda, \lambda]$, and the proposed score reads
\begin{align}
    \label{ex-mcp}
    \mathrm{score}_j^\partial =
    \begin{cases}
        \max \{0, |\nabla_j f(\beta)| - \lambda \}
    \quad &\text{if} \quad \beta_j = 0 \enspace,
    \\
    |\nabla_j f(\beta) + \nabla g_j(\beta_j)|
    \quad &\text{otherwise} \enspace.
    \end{cases}
\end{align}%
To control the working set growth, we use $\mathrm{score}_j^\partial$ to rank the features.
Then, with
$n_k = \max(n_{k-1}, 2 \, | \gsupp(\beta^{(t)})| )$
we take the $n_k$ largest of them in the working set, while retaining features currently in the working set.
This growth quickly rises to the unknown size of the generalized support while avoiding overshooting, as backed up by recent theory in \citet{ndiaye2021continuation}.

\begin{proposition}\label{prop:conv_ws}
    Let $\mathcal{W}_t$ be the $t$-th working set.
    Suppose that \Cref{alg:subproblem} converges toward a critical point, and for all $t\geq 0$, $\mathcal{W}_t \subset \mathcal{W}_{t+1}$, then the
    iterates of
    \Cref{alg:skglm} converge towards a critical point of \Cref{pb:generic_pb}.
\end{proposition}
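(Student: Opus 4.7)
The plan is to combine a pigeonhole argument on the nested working sets with the convergence of the inner solver. Since $(\mathcal{W}_t)_{t\geq 0}$ is a non-decreasing sequence of subsets of the finite set $[p]$, there exists $T \in \bbN$ such that $\mathcal{W}_t = \mathcal{W}_T =: \mathcal{W}$ for every $t \geq T$. From that iteration on, \Cref{alg:skglm} amounts to running \Cref{alg:subproblem} indefinitely on the fixed working set $\mathcal{W}$, with coordinates outside $\mathcal{W}$ kept frozen at their current value. By the hypothesis on the subproblem solver, the iterates $\beta^{(t)}$ then converge to some $\beta^\star$ that is critical for $\Phi$ restricted to $\mathcal{W}$, i.e., $-\nabla_j f(\beta^\star) \in \partial g_j(\beta^\star_j)$ for every $j \in \mathcal{W}$.

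The remaining task is to check the same optimality condition on coordinates $j \notin \mathcal{W}$. I would argue by contradiction: suppose some $j_0 \notin \mathcal{W}$ satisfies $\mathrm{score}_{j_0}^\partial(\beta^\star) > 0$. Continuity of $\nabla f$ together with the explicit piecewise form of $\partial g_{j_0}$ for the penalties considered yields $\mathrm{score}_{j_0}^\partial(\beta^{(t)}) \geq \delta > 0$ for $t$ large enough. Conversely, for every $j \in \mathcal{W}$, the convergence of $\beta^{(t)}$ to a critical point of the restricted problem gives $\mathrm{score}_j^\partial(\beta^{(t)}) \to 0$. Hence for $t$ sufficiently large, $j_0$ sits strictly above every in-set score; since the selection rule keeps all current indices and appends the top-$n_k$ ranked coordinates with $n_k \geq 1$, $j_0$ would be forced into $\mathcal{W}_{t+1}$, contradicting $\mathcal{W}_{t+1} = \mathcal{W}$. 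Combining both regimes yields $-\nabla f(\beta^\star) \in \partial g(\beta^\star)$, i.e., $\beta^\star$ is a critical point of $\Phi$ in the sense of \Cref{def:critical_pt}.

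The main obstacle is justifying the two semicontinuity claims that relate $\mathrm{score}_j^\partial(\beta^{(t)})$ to $\mathrm{score}_j^\partial(\beta^\star)$, since the Fréchet subdifferential of an arbitrary proper lsc function need not be outer semicontinuous. For the separable penalties actually targeted by the paper ($\ell_1$, $\ell_q$ with $0<q<1$, MCP, SCAD, and the box indicator $\iota_{[0,C]}$), the subdifferential has an explicit piecewise description (cf.\ \eqref{ex-mcp} for MCP) which makes $\mathrm{score}_j^\partial$ continuous at any critical coordinate and keeps the contradiction argument clean. A delicate subcase to handle is $\beta^\star_j = 0$ being approached by nonzero iterates on the ``wrong'' side of the kink; the proximal structure of the coordinate updates inside \Cref{alg:subproblem} eliminates this behavior after finitely many steps, which is exactly the content of the support-identification result stated separately in the paper.
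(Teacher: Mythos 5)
Your proof is correct and follows essentially the same route as the paper's (nestedness in the finite set $[p]$ forces the working set to stabilize, the inner solver then yields criticality on the stabilized set, and the selection rule forces the remaining coordinates to satisfy the optimality condition, since a coordinate with persistently positive score would be ranked above the vanishing in-set scores and appended), though you spell out the last step as an explicit contradiction where the paper only sketches it via the stopping criterion. One simplification you missed: for $j_0 \notin \mathcal{W}$ the coordinate $\beta^{(t)}_{j_0}$ is never updated and hence frozen, so $\partial g_{j_0}(\beta^{(t)}_{j_0})$ is a fixed set and $\mathrm{score}_{j_0}^\partial$ is continuous along the iterates (distance to a fixed nonempty set is $1$-Lipschitz in $\nabla_{j_0} f$); the semicontinuity worry and the appeal to support identification in your last paragraph are therefore unnecessary.
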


\begin{minipage}[t]{0.47\linewidth}
    \begin{algorithm}[H]
    \SetKwInOut{Init}{init}
    \SetKwInOut{Input}{input}
    \Input{
        $X, \beta \in \bbR^p,
        n_{\mathrm{out}} \in \bbN,$
        \\
        $n_{\mathrm{in}} \in \bbN,
        \mathrm{ws\_size} \in \bbN,
        \epsilon > 0
        $}
    \caption{\texttt{skglm} (proposed)}\label{alg:skglm}
    \For{$t = 1, \ldots, n_{\mathrm{out}}$}{

            $
            \mathrm{score}
            =
            \big(\dist \left(
                - \nabla_j f(\beta), \partial g_j(\beta_j) \right) \big)_{j \in [p]}
            $

        $\mathrm{ws\_size}
        =
        \max (
            \mathrm{ws\_size},
            2 \times |\gsupp (\beta)| )$

        \tcp{$\mathrm{ws\_size}$ features with largest scores}

        $\mathrm{ws} = \mathrm{arg\_topK}(\mathrm{score}, \mathrm{K=ws\_size})$

        \lIf{$\max_{j \in [p]} \dist \left(
                    - \nabla_j f(\beta), \partial g_j(\beta_j) \right) \leq \epsilon$}{
            stop
        }
        \lElse{
            \tcp{accelerated CD on working set}

            $\beta
            \leftarrow
            \texttt{inner\_solver}(X, \beta, \mathrm{ws},
            n_{\mathrm{in}}, \epsilon)$
        }
    }
  \Return{$\beta$}
\end{algorithm}
\end{minipage}
\hfill
\begin{minipage}[t]{0.5\linewidth}
    \begin{algorithm}[H]
        \SetKwInOut{Init}{init}
        \SetKwInOut{Input}{input}
        \setcounter{AlgoLine}{0}
        \Input{$
            X,
            \beta^{(0)} \in \bbR^p,
            \mathrm{ws} \subset [p],
            % n_{\mathrm{in}} \in \bbN$,
            n_{\mathrm{in}}$,
            $\epsilon$,
            % $\epsilon>0$,
            $M=5$
            }
        \caption{ \texttt{inner\_solver}}\label{alg:subproblem}

                \For{$k = 1, \ldots, n_{\mathrm{in}}$}{
                $\beta^{(k)} \leftarrow
                \texttt{CD}(X, \beta^{(k-1)}, X \beta, \mathrm{ws}) $
                % \hspace{-2em}\tcp*{$\bigo(n |\mathrm{ws}|)$}
                \hspace{-2em}\tcp*{Algo.~\eqref{alg:cd_epoch}}

                \If{$k \, \mathrm{mod} \, M = 0$}{
                    \tcp{
                        Algo.~\eqref{alg:extrapolation}, $\bigo(M^2 |\mathrm{ws}| + M^3)$}
                    $\beta_{\mathrm{ws}}^{\mathrm{extr}}
                    \leftarrow
                    \texttt{Anderson}(\beta_{\mathrm{ws}}^{(k - M)},
                    \dots,
                    \beta_{\mathrm{ws}}^{(M)})$

                    \tcp{
                        test objective $\bigo(n |\mathrm{ws}|)$}
                    \If{
                        $\Phi(
                            \beta_{\mathrm{ws}}^{\mathrm{extr}})
                        <
                        \Phi(
                            \beta_{\mathrm{ws}}^{(k)})
                        $
                    }{
                        $\beta_{\mathrm{ws}}^{(k)}
                        \leftarrow \beta_{\mathrm{ws}}^{\mathrm{extr}} ; X\beta \leftarrow X_{\mathrm{ws}}\beta_{\mathrm{ws}}^{\mathrm{extr}}$

                        % $X\beta \pluseq X_{\mathrm{ws}} (\beta_{\mathrm{ws}}^{\mathrm{extr}} - \beta_{\mathrm{ws}}^{(k)})$
                        }
                }
                \lIf{
                    $\max_{j \in \mathrm{ws}} \dist \left(
                        - \nabla_j f(\beta), \partial g_j(\beta_j) \right) \leq \epsilon $
                }{
                    stop
                }
            }
      \Return{$\beta^{(k)}$
      }
    \end{algorithm}
    \vspace{1mm}
\end{minipage}

Proof of \Cref{prop:conv_ws} can be found in \Cref{app:sub_conv_ws}.
The second key ingredient to our algorithm is to use state-of-the-art Anderson accelerated coordinate descent for non-convex problems.
In \Cref{sub:cd_analysis} we show that coordinate descent yields finite time support identification for a large class of non-convex problems (\Cref{prop:finite_identification}), which leads to %local linear convergence rate (\Cref{prop:dl_fixed_point}), and
acceleration (\Cref{prop:acc_lin_conv}).
As experiments demonstrate in \Cref{sec:experiments}, this rate allows our algorithm to surpass state-of-the-art solvers.
%
%%%%%%%%%%%%%%%%%%%%%%%%%%%%%%%%%%%%%%%%%%%%%%%%%%%%%%%%%%%%%%%%%%%%%%%%%%%%%%
\subsection{Anderson accelerated coordinate descent analysis for $\alpha$-semi-convex penalties}
%%%%%%%%%%%%%%%%%%%%%%%%%%%%%%%%%%%%%%%%%%%%%%%%%%%%%%%%%%%%%%%%%%%%%%%%%%%%%%
\label{sub:cd_analysis}
We now turn to our main technical contributions:
we show that \Cref{alg:subproblem} achieves finite time support identification (\Cref{prop:finite_identification}) of the generalized support (\Cref{def:generalized_supp}) for specific class of non-smooth non-convex penalties (\Cref{ass:alpha_semi_cvx}), which includes the MCP (\Cref{prop:mpc_alpha_semicvx}).
Based on \Cref{prop:finite_identification}, we are able to derive convergence rates for Anderson acceleration (\Cref{prop:acc_lin_conv}).

We study our inner solver (\Cref{alg:subproblem}); for convenience we still refer to $\beta$ and $X$ for their counterparts restricted to the working set.
% We show that coordinate descent achieves support identification (\Cref{prop:finite_identification}) and local linear convergence (\Cref{prop:dl_fixed_point}) for a class of non-convex functions.
The following assumptions are required.
\begin{assumption}[$\alpha$-semi-convex]\label{ass:alpha_semi_cvx}
    For all $j \in [p]$ $g_j/L_j$ is $\alpha$-semi-convex, \ie $g_j/L_j + \alpha \normin{\cdot}^2 / 2$ is convex, with $\alpha < 1$.
\end{assumption}
Note that in statistics, the admissible value range of hyperparameters for  MCP and SCAD are datafit-dependent, (see \citealt[Sec. 2.1]{Breheny_Huang2011}, normalized columns and
$\gamma > 1 = 1 / \norm{X_{:j}} = 1 / L_j$
or \citealt[Eq. 4.2]{Soubies_BlancFeraud_Aubert2015})
and yields $\alpha$-semi-convexity for MCP and SCAD\footnote{However MCP and SCAD are not $\alpha$-semiconvex for all hyperparameter values.}.
\begin{proposition}[$\alpha$-semi-convexity of MCP]
    \label{prop:mpc_alpha_semicvx}
    Let
    % \begin{align}
        $\MCP_{\lambda, \gamma}(x)
        \eqdef
        \begin{cases}
            \lambda \abs{x} - \frac{x^2}{2 \gamma} \enspace,
            &\text{if}\: \abs{x} \leq \gamma \lambda \enspace, \\
                \frac{1}{2} \gamma \lambda^2 \enspace,
            &\text{if}\: \abs{x} > \gamma \lambda \enspace .
        \end{cases}$
    % \end{align}
    \\
    If $\gamma > 1 / L_j$,
    then $\MCP_{\lambda, \gamma} / L_j$ is $\alpha$-semi-convex with $\alpha=\tfrac{1}{2}(1 + \frac{1}{\gamma L_j})$ (\ie \Cref{ass:alpha_semi_cvx} holds).
\end{proposition}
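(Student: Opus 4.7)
The plan is a direct computation. I would set $h(x) \eqdef \MCP_{\lambda,\gamma}(x)/L_j + \tfrac{\alpha}{2} x^2$ with $\alpha = \tfrac{1}{2}(1 + \tfrac{1}{\gamma L_j})$, and show that $h$ is convex. First, observe that the hypothesis $\gamma > 1/L_j$ immediately gives $\tfrac{1}{\gamma L_j} < 1$, hence $\alpha < 1$, which takes care of the quantitative part of \Cref{ass:alpha_semi_cvx}.

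Next, I would split on the two branches of the MCP. On $\{|x| \leq \gamma\lambda\}$, plugging in the definition yields
\[
h(x) = \frac{\lambda}{L_j}\abs{x} + \frac{1}{2}\Bigl(\alpha - \frac{1}{\gamma L_j}\Bigr) x^2
     = \frac{\lambda}{L_j}\abs{x} + \frac{1}{4}\Bigl(1 - \frac{1}{\gamma L_j}\Bigr) x^2,
\]
which is a sum of the convex function $\abs{\cdot}$ with a nonnegative multiple of $x^2$ (nonnegative precisely because $\gamma L_j > 1$), hence convex. On $\{|x| > \gamma\lambda\}$, $h$ reduces to $\tfrac{\gamma\lambda^2}{2L_j} + \tfrac{\alpha}{2}x^2$, a strictly convex quadratic plus a constant.

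It remains to glue the two pieces. I would compute the one-sided derivatives at the transition points $x = \pm\gamma\lambda$: the particular value of $\alpha$ is chosen so that the left and right derivatives coincide (on $x = \gamma\lambda$, both equal $\tfrac{\lambda}{2L_j} + \tfrac{\gamma\lambda}{2}$), so that $h$ is $C^1$ on $\bbR\setminus\{0\}$ with nondecreasing derivative on each open interval of smoothness. At $x = 0$, $h$ inherits from $\abs{\cdot}$ the symmetric subdifferential $[-\lambda/L_j, \lambda/L_j]$, which is convex-valued and consistent with the one-sided limits of $h'$. Monotonicity of the Fréchet subdifferential of $h$ on all of $\bbR$ then yields convexity of $h$.

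The only delicate point is the gluing at $x = \pm\gamma\lambda$; but it is precisely what pins down the value of $\alpha$ in the statement, and the verification is a one-line algebraic identity. I expect no substantial obstacle: the claim is essentially an algebraic check that the two convex pieces share a common affine tangent at the breakpoints, and this is forced by the choice of $\alpha$.
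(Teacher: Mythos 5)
Your proposal is correct and follows essentially the same route as the paper: both add $\tfrac{\alpha}{2}x^2$ to $\MCP_{\lambda,\gamma}/L_j$, compute the derivative on each of the two branches, verify it is nondecreasing, and use evenness to extend convexity from the half-line to all of $\bbR$. If anything, you are slightly more explicit than the paper about the derivative matching at $x=\pm\gamma\lambda$ and the subdifferential at $0$, which the paper's argument leaves implicit in the claim that $h'$ is increasing on $(0,+\infty)$.
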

%

% for the admissible range of hyperparameters, \Cref{ass:alpha_semi_cvx} is verified for the MCP and SCAD.
Note that \Cref{ass:alpha_semi_cvx} does not hold for the $\ell_q$-penalties ($0 < q < 1$), for which we propose an alternative in \Cref{app:lq}.
\begin{assumption}[Existence]\label{ass:non_empty}
    \Cref{pb:generic_pb} admits at least one critical point.
\end{assumption}
In \Cref{prop:finite_identification}, convergence of \Cref{alg:subproblem} toward a critical point $\hat \beta$ is assumed, and the following assumption is made on this critical point.
\begin{assumption}[Non degeneracy]\label{ass:non_degeneracy}
The considered critical point $\hat \beta \in \bbR^p$ is non-degenerated:
for all $j \notin \gsupp(\hat \beta)$,
\begin{equation}
    - \nabla f_j (\hat \beta) \in \, \mathrm{interior}(\partial g_j(\hat \beta_j)).
\end{equation}
% \begin{equation}
%     - \nabla f (\hat \beta) \in \mathrm{ri} \, \partial g(\hat \beta)
%     \enspace .
% \end{equation}
\end{assumption}
\Cref{ass:non_degeneracy} is a generalization of qualification constraints \citep[Sec. 1]{Hare_Lewis2007}, and is usual in the machine learning literature \citep{Zhao_Yu2006,Bach08,Vaiter_Peyre_Fadili2015}.
For the $\ell_1$-norm, if the entries of the design matrix $X$ are drawn from an i.i.d normal distribution, then \Cref{ass:non_degeneracy} holds with high probability \citep{Candes_Tao2005,Rudelson_Vershynin2008}.

Equipped with the previous assumptions we show that coordinate descent achieves model identification for this class of non-convex problems.
\begin{proposition}[Model identification of CD]
    \label{prop:finite_identification}
   Suppose
    \begin{enumerate}[itemsep=-2pt,topsep=0pt]
        \item \Cref{ass:f,ass:g,ass:non_empty,ass:alpha_semi_cvx} hold.
        \item The sequence $(\beta^{(k)})_{k\geq 0}$ generated by coordinate descent (\Cref{alg:subproblem}  without extrapolation) converges toward a critical point $\hat \beta$.
        \item \Cref{ass:non_degeneracy} holds for $\hat \beta$.
    \end{enumerate}
    Then, \Cref{alg:subproblem} (without extrapolation)  identifies the model in finitely many iterations: there exists $K>0$ such that for all
    $k\geq K$,
    $\beta_{\cS^c}^{(k)} = \hat \beta_{\cS^c}$.
\end{proposition}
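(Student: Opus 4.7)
The plan is to view each coordinate descent step as a single-valued proximal update and exploit \Cref{ass:non_degeneracy} together with the assumed convergence $\beta^{(k)} \to \hat\beta$ to show that, for every coordinate $j \notin \gsupp(\hat\beta)$, the update eventually freezes at $\hat\beta_j$. Concretely, each inner update reads $\beta_j \leftarrow \prox_{g_j/L_j}\bigl(\beta_j - \nabla_j f(\beta)/L_j\bigr)$, so I would first establish that this prox is well defined and characterized by the Fréchet inclusion $L_j(z-y) \in \partial g_j(y)$, and then show that, for $z$ close to $\hat z_j \eqdef \hat\beta_j - \nabla_j f(\hat\beta)/L_j$, the prox is constantly equal to $\hat\beta_j$.

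For the well-posedness, I would argue that \Cref{ass:alpha_semi_cvx} with $\alpha<1$ makes the proximal objective $y \mapsto g_j(y)/L_j + (y-z)^2/2$ strongly convex, since it decomposes as the sum of the convex function $g_j/L_j + \alpha(\cdot)^2/2$ and a $(1-\alpha)$-strongly convex quadratic. Strong convexity forces the prox to be single-valued and the Fréchet optimality condition to be both necessary and sufficient for the minimizer. For the stability property, I would observe that the set $\{z : \prox_{g_j/L_j}(z) = \hat\beta_j\}$ equals $\hat\beta_j + \partial g_j(\hat\beta_j)/L_j$. At the critical point one has $-\nabla_j f(\hat\beta) \in \partial g_j(\hat\beta_j)$; \Cref{ass:non_degeneracy} strengthens this to $-\nabla_j f(\hat\beta) \in \mathrm{interior}(\partial g_j(\hat\beta_j))$ for $j \notin \gsupp(\hat\beta)$, producing an open neighborhood $U_j$ of $\hat z_j$ on which the prox is constantly $\hat\beta_j$. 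Then by continuity of $\nabla f$ and the convergence $\beta^{(k)} \to \hat\beta$, the argument of the prox in the $j$-th CD update lies in $U_j$ for all $k$ large enough, hence that update sets $\beta_j = \hat\beta_j$. Taking the maximum of the resulting thresholds over the finitely many $j \notin \gsupp(\hat\beta)$ yields a single $K$ such that $\beta_{\cS^c}^{(k)} = \hat\beta_{\cS^c}$ for all $k \geq K$, and stability of the prox preserves this equality under subsequent coordinate updates.

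The main obstacle is ensuring that the Fréchet inclusion is truly sufficient for the prox identity in the non-convex setting: one must rule out the proximal objective admitting a stationary point at $\hat\beta_j$ without it being the global minimizer. This is exactly where the quantitative requirement $\alpha<1$ matters, as it provides strong convexity of the full prox objective and hence a unique stationary point. A secondary care point is that within one epoch CD uses intermediate iterates in which some coordinates have already been updated; however, the assumed convergence of $(\beta^{(k)})$ together with continuity of $\nabla f$ still propagate to these intermediate vectors, so the neighborhood argument applies coordinate-by-coordinate within each epoch.
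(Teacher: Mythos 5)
Your proposal is correct and follows essentially the same route as the paper's proof: both characterize the coordinate update as a prox whose optimality condition $L_j(z-\hat\beta_j)\in\partial g_j(\hat\beta_j)$ is necessary and sufficient thanks to the strong convexity of the prox objective under \Cref{ass:alpha_semi_cvx} with $\alpha<1$, then use the interior condition of \Cref{ass:non_degeneracy} plus continuity of $\nabla f$ and convergence of the (intermediate) iterates to show the update freezes at $\hat\beta_j$ for all $j\notin\cS$ after finitely many epochs. Your explicit identification of $\{z:\prox_{g_j/L_j}(z)=\hat\beta_j\}$ with $\hat\beta_j+\partial g_j(\hat\beta_j)/L_j$ and your remark on intermediate within-epoch iterates merely make precise two points the paper leaves implicit.
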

% \begin{remark}[Comments on \Cref{prop:finite_identification} and its improvement over existing work]
    In other words, for $k$ large enough,
    $\beta^{(k)}$ shares the generalized support of $\hat \beta$.
    The identification property was proved for a proximal gradient descent algorithm in the non-convex case \citep{Liang_Faidli_Peyre_2016} under the assumption that the non-smooth function $g$ is partly smooth \citep{Lewis_2002}.
    For ourselves, \Cref{prop:finite_identification} not rely on the partly smooth assumption to ensure identification property.
    Authors are not aware of previous identification results for coordinate descent in the non-convex case.

In addition, if $f$ and $g$ are locally regular on the generalized support at the considered critical point, our algorithm enjoys local acceleration when combined with Anderson extrapolation (\Cref{prop:acc_lin_conv}).
\begin{assumption}[Locally $\mathcal{C}^3$]\label{ass:locally_c2}
    For all $j \in \mathcal{S} \eqdef \gsupp(\hat \beta)$, $g_j$ is locally $\mathcal{C}^3$ around $\hat \beta_j$, and $f$ is locally $\mathcal{C}^3$ around $\hat \beta$.
\end{assumption}
\Cref{ass:locally_c2} on the function $f$ is mild and holds for usual machine learning datafitting terms.
\Cref{ass:locally_c2} on the functions $g_j$, $j \in \cS$, is stronger: for instance, for the MCP, it implies $\hat \beta_j \neq \gamma \lambda$ for all $j \in \cS$.
However this assumption is standard in the literature, see \citealt[Sec. 3.3]{Liang_Faidli_Peyre_2016}
\begin{assumption} (Local strong convexity)\label{ass:local_str_cvx}
    % For a solution $\hat \beta\in \argmin_{\beta\in\bbR^p} \Phi(\beta)$,
    The Hessian of $f$ at the considered critical point $\hat \beta \in \bbR^p$, restricted to its generalized support $\cS$, is positive definite, \ie
        $\nabla^{2}_{\cS, \cS}f(\hat \beta)
        +
        \nabla^{2}_{\cS, \cS} g(\hat \beta)
        \succ 0$.
\end{assumption}
\Cref{ass:local_str_cvx} requires local strong convexity restricted to the generalized support $\cS$, which is standard in the MCP / SCAD literature (\citealt[Section 4.1]{Breheny_Huang2011}) and is usual to derive local linear rates of convergence \citep[Section 3.3]{Liang_Faidli_Peyre_2016}.
For instance, for the Lasso, if the entries of the design matrix $X$ are drawn from a continuous distribution, then \Cref{ass:local_str_cvx} holds with probability one \citep[Lemma 4]{Tibshirani2013}.
% The local strong convexity hypothesis (\Cref{ass:local_str_cvx}) will ensure local linear convergence rates (\Cref{prop:dl_fixed_point}).
%
%
\begin{proposition} \label{prop:acc_lin_conv}
    \sloppy
    Consider a critical point $\hat \beta$ and suppose
    \begin{enumerate}[itemsep=-2pt,topsep=0pt]
        \item \Cref{ass:f,ass:g,ass:non_empty} hold.
        \item The functions $f$ and $g_j$, $j \in [p]$ are piecewise quadratic (which is the case for the MCP regression).
        \item The sequence $(\beta^{(k)})_{k\geq 0}$ generated by Anderson accelerated coordinate descent with updates from $1$ to $p$ and $p$ to $1$ (\Cref{alg:subproblem} with extrapolation) converges to a critical point $\hat \beta$.
        \item \Cref{ass:non_degeneracy,ass:locally_c2,ass:local_str_cvx} hold for $\hat \beta$.
    \end{enumerate}
        Then there exists
        $K \in \mathbb{N}$, and a $\mathcal{C}^1$ function
        $\psi: \mathbb{R}^{|\cS|} \to \mathbb{R}^{|\cS|}$
        such that, for all $k \in \mathbb{N}, k \geq K$:
        \begin{align}
            \beta_j^{(k)}
            &=
            \hat \beta_j
            \text{ , for all $j \in \cS^c$, }
        \end{align}
        Let
        $T \eqdef  \cJ \psi (\hat \beta)$,
        $H \eqdef \nabla_{\cS, \cS}^{2} f(\hat \beta) + \nabla_{\cS, \cS}^{2} g(\hat \beta)$,
        $\zeta \eqdef (1 - \sqrt{1 - \rho(T)}) / (1 + \sqrt{1- \rho(T)})$ and
        $B \eqdef ( T - \Id)^\top ( T - \Id)$.
        Then $\rho(T) < 1$ and the iterates of Anderson extrapolation enjoy local accelerated convergence rate:
        \begin{align}
            &\normin{\beta_{\cS}^{(k-K)} - \hat \beta_{\cS}}_B
            \leq
            % \nonumber
            % \\
            \Big(
                \sqrt{\kappa(H)}
                \tfrac{2\zeta^{M-1}}{1 + \zeta^{2(M-1)}} \Big)^{(k-K) / M}
            \normin{\beta_{\cS}^{(K)} - \hat \beta_{\cS}}_B
            \enspace .
        \end{align}
\end{proposition}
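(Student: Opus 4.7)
The plan is to reduce the statement to the analysis of a smooth fixed point iteration on the generalized support, then transfer known Anderson acceleration bounds for symmetric linear iterations. First I would invoke \Cref{prop:finite_identification}: since \Cref{ass:f,ass:g,ass:non_empty,ass:alpha_semi_cvx,ass:non_degeneracy} all hold (piecewise quadratic functions covered by the proposition's hypotheses yield $\alpha$-semi-convexity of $g_j/L_j$ for admissible parameters, in particular for MCP regression via \Cref{prop:mpc_alpha_semicvx}), there exists $K_0$ such that $\beta^{(k)}_{\cS^c} = \hat\beta_{\cS^c}$ for all $k\geq K_0$. This yields the first claim of the proposition. From that iteration on, the outer and inner CD sweeps only update coordinates in $\cS$, and by extrapolation monotonicity the Anderson-accelerated iterates inherit the identified support once $k\geq K_0 + M$.

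Second, I would show that on $\cS$ each CD update is locally affine. The prox $\prox_{g_j/L_j}$ applied to $\hat\beta_j - \nabla_j f(\hat\beta)/L_j$ is selected in a single quadratic piece of $g_j$ thanks to the non-degeneracy (\Cref{ass:non_degeneracy}) and to $\hat\beta_j$ being in the interior of a smooth piece of $g_j$ (\Cref{ass:locally_c2} plus piecewise quadratic). By continuity, the same piece is selected in a neighborhood of $\hat\beta_\cS$, so the map $\beta_\cS \mapsto \beta_\cS'$ performing one coordinate update is affine in that neighborhood. Composing the $p$ forward updates and the $p$ backward updates gives a locally $\mathcal{C}^1$ (in fact affine) mapping $\psi:\bbR^{|\cS|}\to\bbR^{|\cS|}$ with $\psi(\hat\beta_\cS)=\hat\beta_\cS$, and I would set $T = \cJ\psi(\hat\beta)$. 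Because the local quadratic model is $(1/2)\langle\cdot, H\cdot\rangle + \text{linear}$ with $H\succ 0$ by \Cref{ass:local_str_cvx}, $T$ is precisely the iteration matrix of a \emph{symmetric} Gauss–Seidel sweep on the SPD system $H$. Classical results on symmetric successive over-relaxation then give $\rho(T) < 1$, and, critically, $T$ is self-adjoint and has real spectrum contained in $[0,\rho(T))$ when expressed in the $B$-inner product with $B = (T-\Id)^\top(T-\Id)$, since $B$ coincides (up to a positive factor) with the natural energy inner product associated to the symmetric splitting of $H$.

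Third, I would transfer the known convergence rate for Anderson extrapolation applied to a locally affine fixed point iteration with self-adjoint iteration matrix of spectral radius $\rho(T)<1$. Once the iterates lie in the neighborhood where $\psi$ is affine, the $M$-step Anderson procedure coincides with a (rescaled) regularized Chebyshev acceleration on $T$; the standard Chebyshev estimate then gives error contraction by a factor $2\zeta^{M-1}/(1+\zeta^{2(M-1)})$ per acceleration cycle in the $B$-norm, where $\zeta=(1-\sqrt{1-\rho(T)})/(1+\sqrt{1-\rho(T)})$ is the conformal parameter mapping the spectrum to a Bernstein ellipse. Passing from the energy norm induced by $H$ (the natural one for local strong convexity) to the $B$-norm costs a factor $\sqrt{\kappa(H)}$, giving the announced rate. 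This final argument is the main obstacle: one must verify that Anderson extrapolation reduces to a Krylov recurrence on $T$ (so that Chebyshev optimality applies), and carefully handle the change of inner product between $H$ and $B$, since these bounds are usually proved for symmetric Richardson iteration rather than symmetric Gauss–Seidel. I would follow the Anderson-for-CD template of \citet{Bertrand_Massias2020}, adapting its convex Lasso analysis to the present piecewise quadratic non-convex setting, where the identification result of the second step ensures that the relevant quadratic piece is fixed from iteration $K$ onward.
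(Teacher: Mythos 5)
Your proposal follows essentially the same route as the paper: finite-time identification via \Cref{prop:finite_identification}, local linearization of the symmetric ($1$-to-$p$ then $p$-to-$1$) coordinate-descent sweep into a fixed-point map $\psi$ whose Jacobian $T$ is exactly linear on a neighborhood thanks to the piecewise-quadratic hypothesis, and transfer of the Anderson/Chebyshev rate from \citet[Prop.~4]{Bertrand_Massias2020}. The only real divergence is localized: you justify $\rho(T)<1$ by appeal to classical SSOR theory (note the sweep uses steps $1/L_j$ rather than $1/H_{jj}$, so it is an under-relaxed symmetric Gauss--Seidel, which SSOR theory still covers), whereas the paper proves it directly through the explicit similarity $T = M^{-1/2}(\Id-B^{(1)})\cdots(\Id-B^{(|\cS|)})(\Id-B^{(|\cS|)})\cdots(\Id-B^{(1)})M^{1/2}$ and a chain of rank-one-projector lemmas in \Cref{prop:dl_fixed_point}; also, your parenthetical claim that $B=(T-\Id)^\top(T-\Id)$ coincides up to a positive factor with the energy inner product of $H$ is not true in general, but it is also not needed, since the cited acceleration result delivers the bound in the $B$-norm directly.
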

The proof can be found in \Cref{app:local_acc}.

\textbf{Related work.}
Most Anderson acceleration convergence results are shown for quadratic objectives for specific algorithms: gradient descent \citep{Golub_Varga1961,Anderson65}, ADMM \citep{Poon_Liang2019}, coordinate descent \citep{bertrand2020implicit}.
Outside of the quadratic case, convergence results are usually significantly weaker \citep{Scieur_dAspremont_Bach2016,Sidi17,Brezinski2018,Mai_Johansson_19,Ouyang_Peng_Yao_Zhang_Deng2020}.
Regarding the smooth non-convex case, \citet{Wei_Bao_Liu2021} proposed a stochastic Anderson acceleration and proved convergence towards a critical point.
\Cref{prop:acc_lin_conv} generalizes \citet[Prop 2.1]{Scieur_dAspremont_Bach2020} and \citet[Prop. 4]{Bertrand_Massias2020} to the proximal convex and $\alpha$-semi-convex cases.
To our knowledge this is one of the first quantitative results for Anderson acceleration in a non-convex setting.
%
%
%%%%%%%%%%%%%%%%%%%%%%%%%%%%%%%%%%%%%%%%%%%%%%%%%%%%%%%%%%%%%%%%%%%%%%%%%%%%
\subsection{Comparison with existing work}
%%%%%%%%%%%%%%%%%%%%%%%%%%%%%%%%%%%%%%%%%%%%%%%%%%%%%%%%%%%%%%%%%%%%%%%%%%%%
%
\begin{table*}[tp]
    \center
    \caption{Most popular packages for sparse generalized linear models.}
    \begin{tabular}{c|cccl}
        % \hline
         Name
         & Acceleration
         & Huge scale
         & Nncvx
         & Modular\\
        \hline
        \hline
        \texttt{glmnet} \citep{Friedman_Hastie_Tibshirani10}
            & \xmark & \xmark & \xmark & \xmark
            \xspace  \ \ (\textcolor{red}{Fortran}) \\
        \texttt{scikit-learn} \citep{Pedregosa_etal11}
            & \xmark & \xmark & \xmark & \xmark
            \xspace \ \ (\textcolor{red}{Cython}) \\
        \texttt{lightning} \citep{Blondel_Pedregosa2016}
            & \xmark & \xmark & \xmark & \cmark
            \xspace \ \ (\textcolor{red}{Cython}) \\
        \texttt{celer} \citep{Massias_Gramfort_Salmon2018}
            & \cmark & \cmark & \xmark & \xmark
            \xspace \ \ (\textcolor{red}{Cython}) \\
        \texttt{picasso} \citep{picasso}
            &\xmark & \xmark & \cmark & \xmark
            \xspace \ \  (\textcolor{red}{\texttt{C++}}) \\
        \texttt{pyGLMnet} \citep{pyGMLnet}
            & \xmark & \xmark \xmark & \xmark & \cmark
            \xspace \ \ (\textcolor{olive}{Python}) \\
        \texttt{fireworks} \citep{Rakotomamonjy_Flamary_Gasso_Salmon20}
            & \xmark & \cmark & \cmark & N.A.
            \xspace \ \ (\textcolor{olive}{Python}) \\
        \texttt{skglm} (ours)
            &\cmark & \cmark & \cmark \cmark & \cmark
            \xspace \ \  (\textcolor{olive}{Python}) \\
    \end{tabular}
    \label{table:summary_packages}
\end{table*}

In this section we compare our contribution to existing algorithms and implementations, which are summarized in \Cref{table:summary_packages}.
\emph{Huge scale} refers to the fact that the algorithm can run on problems with millions of variables. %and handle large sparse design matrices.
\emph{Non-convex} tells if the algorithm handles non-convex penalties.
\emph{Modular} indicates that it is easy to add a new model, through a different datafitting term or penalty.

The packages \texttt{glmnet} \citep{Friedman_Hastie_Tibshirani10}, \texttt{scikit-learn} \citep{Pedregosa_etal11} and \texttt{lightning} \citep{Blondel_Pedregosa2016} implement coordinate descent (cyclic or random).
They rely on compiled code such as Fortran or Cython, making it very difficult to implement new models\footnote{%
\url{https://github.com/scikit-learn/scikit-learn/pull/10745} (4 years old)}
or faster algorithms like working set\footnote{
    \url{https://github.com/scikit-learn/scikit-learn/pull/7853} (5 years old)}.
They do not handle non-convex penalties.

More recent algorithms such as \texttt{blitz} \citep{Johnson_Guestrin15}, \texttt{celer} \citep{Massias_Gramfort_Salmon2018}, \texttt{picasso} \citep{picasso} or \texttt{fireworks} \citep{Rakotomamonjy_Flamary_Gasso_Salmon20} use working set strategies.
\texttt{celer} and \texttt{blitz} are state-of-the-art algorithms for the Lasso, but their score to prioritize features relies on duality.
\texttt{fireworks} extends \texttt{blitz} to some non-convex penalties (writing as difference of convex functions), with
    $\mathrm{score}_{j}^{\mathrm{fireworks}}
    =
    \dist(-\nabla_j f (\beta), \partial g_j (0))$.
Yet this rule does not consider the subdifferential of $g$ at the current point, but at 0, which is a coarse information.
Finally, \texttt{fireworks}, building upon the seminal non convex working set solver of \citet{Boisbunon_Flamary_Rakotomamonjy}, does not provide accelerated convergence rates and does not come with a public implementation.
\texttt{picasso} \citep{picasso} lacks modularity (penalties are hardcoded), and the solver is not suited for huge scale (it does not support large sparse matrices).
\citet{Deng_Lan2019} proposed an algorithm based on inertially accelerated coordinate descent, which fails to provide practical speedups according to \citet{Bertrand_Massias2020}.

Contrary to these algorithms, ours is generic and relies only on the knowledge of $\nabla f$ and $\prox_g$.
For any new penalty, this information can be written in a few lines of Python code, compiled with numba \citep{numba} for speed efficiency.
We therefore improve state-of-the-art algorithms in the convex case, and
generalize to virtually any datafit and penalty, even nonconvex.

\section{Experiments}\label{sec:experiments}
%%%%%%%%%%%%%%%%%%%%%%%%%%%%%%%%%%%%%%%%%%%%%
%
\begin{figure*}[tb]
    \centering
    \includegraphics[width=0.8\linewidth]{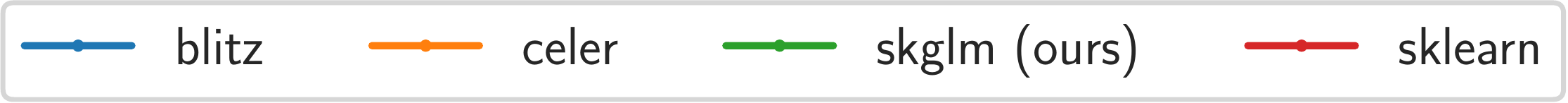}
    \includegraphics[width=0.9\linewidth]{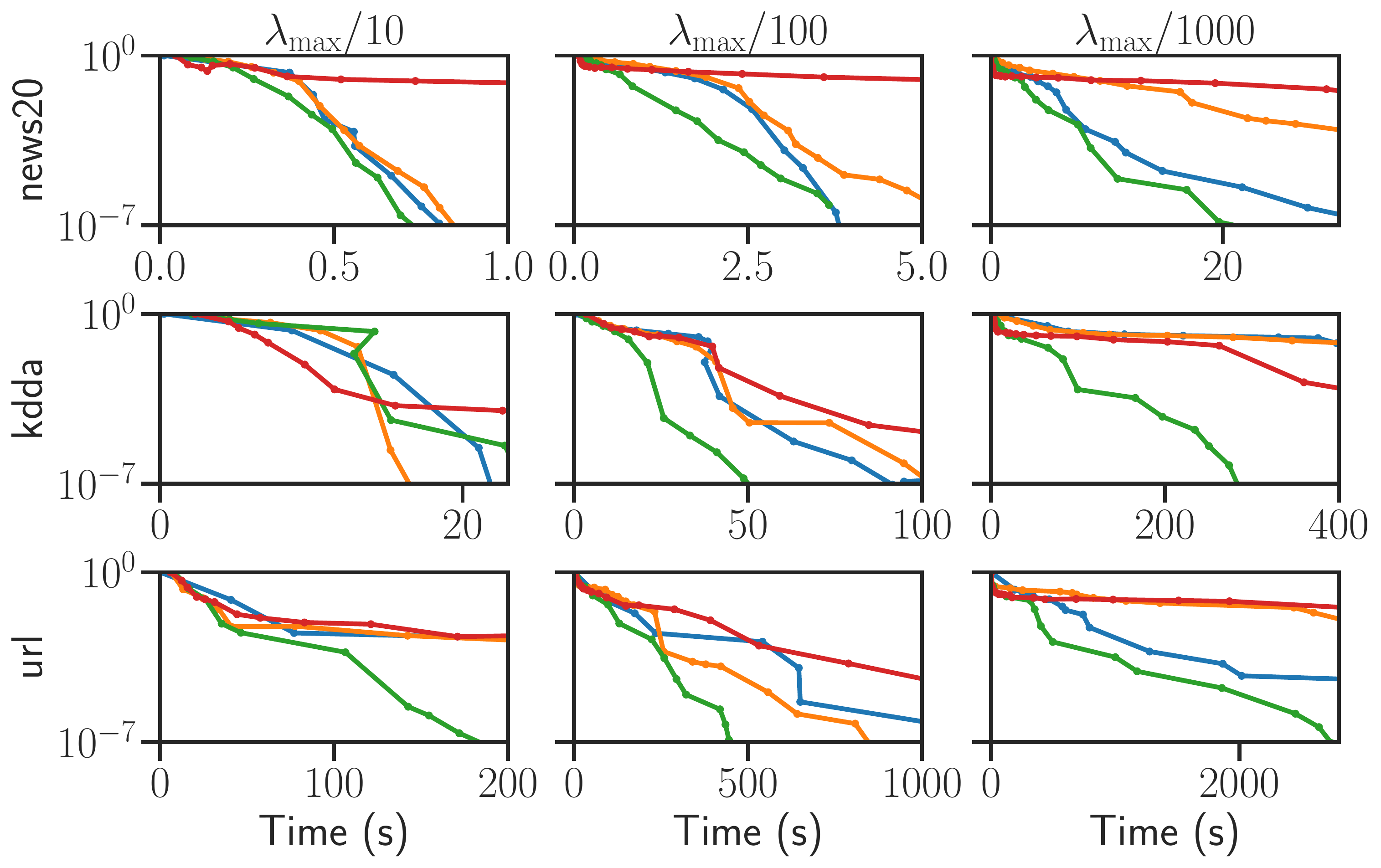}
    \caption{\textbf{Lasso, duality gap.}
    Normalized duality gap as a function of time for the Lasso on multiple datasets, for multiple values of $\lambda$.
    }
    \label{fig:duality_gap_lasso}
\end{figure*}
%%%%%%%%%%%%%%%%%%%%%%%%%%%%%%%%%%%%%%%%%%%%%%%%%%%%%%%%%%%%%%%
%
\sloppy
Our package relying on numpy and numba \citep{numba,harris2020array} is attached in the supplementary material.
An open source, fully tested and documented version of the code can be found at \url{https://github.com/scikit-learn-contrib/skglm}.
We use datasets from \texttt{libsvm}\footnote{\url{https://www.csie.ntu.edu.tw/~cjlin/libsvmtools/datasets/}} (\citealt{Fan_Chang_Hsieh_Wang_Lin08}, see \cref{table:summary_data}).

We compare multiple algorithms to solve popular Machine Learning and inverse problems: Lasso, Elastic net, multitask sparse regression, MCP regression.
The compared algorithms are the following:
\begin{itemize}[itemsep=0pt,topsep=0pt]
    \item \texttt{scikit-learn} \citep{Pedregosa_etal11}, which implements coordinate descent in Cython, %with cyclic index selection in Cython.
    \item \texttt{celer} \citep{Massias_Vaiter_Salmon_Gramfort2019}, which combines working sets, screening rules, coordinate descent, and Anderson acceleration in the dual, in Cython,
    \item \texttt{blitz} \citep{Johnson_Guestrin15}, which combines working sets with prox-Newton iterations \citep{Lee_Sun_Saunders2012} in C++,
    % \item \texttt{lightning} \citep{Blondel_Pedregosa2016}, which implements coordinate descent and shrinking heuristics in Cython,
    % The unsafe shrinking heuristics sometimes lead to non-convergence, in which case we disabled it,
    \item coordinate descent (CD, \citealt{tseng2009coordinate}),
    \item \texttt{skglm} (\Cref{alg:skglm}, ours), using $M=5$ iterates for the Anderson extrapolation.
\end{itemize}

\textbf{Other solvers.}
Experiment per experiment, there exist niche solvers (such as aggressive Gap Safe Rules, \citealt{Ndiaye_Fercoq_Salmon20}).
Since our goal is a \emph{general purpose} algorithm able to deal with many models, we do not include them in the comparison.
In addition, we focus on solving a single instance of \Cref{pb:generic_pb}, rather than a regularization path (\ie a sequence of problems for multiple regularization strengths).
As \texttt{glmnet} is designed to compute regularization paths, we could not include it in the comparison.
The reader can refer to \citet[Fig. 4]{Johnson_Guestrin15} or \Cref{fig:duality_gap_enet_suppl} in \Cref{app:expe_suppl} for comparisons on single optimization problems with \texttt{glmnet};
\texttt{glmnet} and additional algorithms are discussed in \Cref{app:expe_suppl}.

\textbf{How to do a fair comparison between solvers?}
To plot the convergence curves, we use the \texttt{benchopt}\footnote{\url{https://github.com/benchopt/benchopt}} benchmarking package \citep{benchopt}.
In order to automate and reproduce optimization benchmarks it treats solvers as black boxes.
It launches them several times with increasing maximum number of iterations, and stores the resulting objective values and times to reach it.
As each point on a solver curve is obtained in a different run, the curves are not monotonic, and there may be several points corresponding to the same time.
This merely reflects the variability in solvers running time across runs; we refer to \Cref{fig:non_monotonous_curves} in \Cref{app:variability} for the inevitability of this phenomenon with black box solvers.
%
%%%%%%%%%%%%%%%%%%%%%%%%%%%%%%%%%%%%%%%%%%%%%%%%%%%%%%%%%%%%%%%
\subsection{Convex problems}
%%%%%%%%%%%%%%%%%%%%%%%%%%%%%%%%%%%%%%%%%%%%%%%%%%%%%%%%%%%%%%%
\paragraph{Lasso.}
%%%%%%%%%%%%%%%%%%%%%%%%%%%%%%%%%%%%%%%%%%%%%%%%%%%%%%%%%%%%%%%
In \Cref{fig:duality_gap_lasso} we compare solvers for the Lasso:
% \begin{problem}\label{pb:lasso}
%     \argmin_{\beta \in \bbR^p}
%     \frac{1}{2n} \normin{y - X \beta}^2
%     + \lambda \normin{\beta}_1
%     \enspace .
% \end{problem}
($f = \frac{1}{2n}\norm{y - X\cdot}^2$, $g_j = \lambda \abs{\cdot}$).
We parametrize $\lambda$ as a fraction of $\lambda_{\text{max}} = \normin{X^\top y}_\infty / n$, smallest regularization strength for which $\hat \beta = 0$.
For large scale datasets (\emph{rcv1}, \emph{news20}),
\texttt{skglm} yields performances better or similar to the state-of-the-art algorithms \texttt{blitz} and \texttt{celer}.
For huge scale datasets (\emph{kdda} and \emph{url}), \texttt{skglm} yields significant speedups over them.
The improvement over the popular \texttt{scikit-learn} can be of two orders of magnitude.
Thus, \emph{while dealing with many more models, our algorithm still yields state-of-the-art speed for basic ones}.
%%%%%%%%%%%%%%%%%%%%%%%%%%%%%%%%%%%%%%%%%%%%%%%%%%%%%%%%%%%%%%
%

%
\begin{figure*}[tb]
    \centering
        \includegraphics[width=0.7\linewidth]{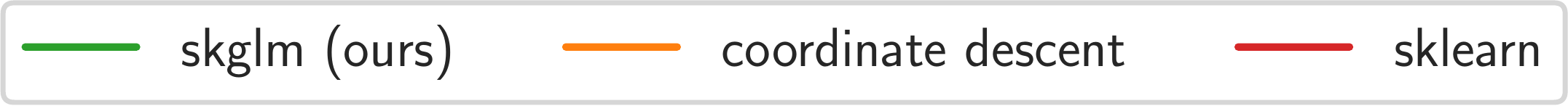}
        \includegraphics[width=0.9\linewidth]{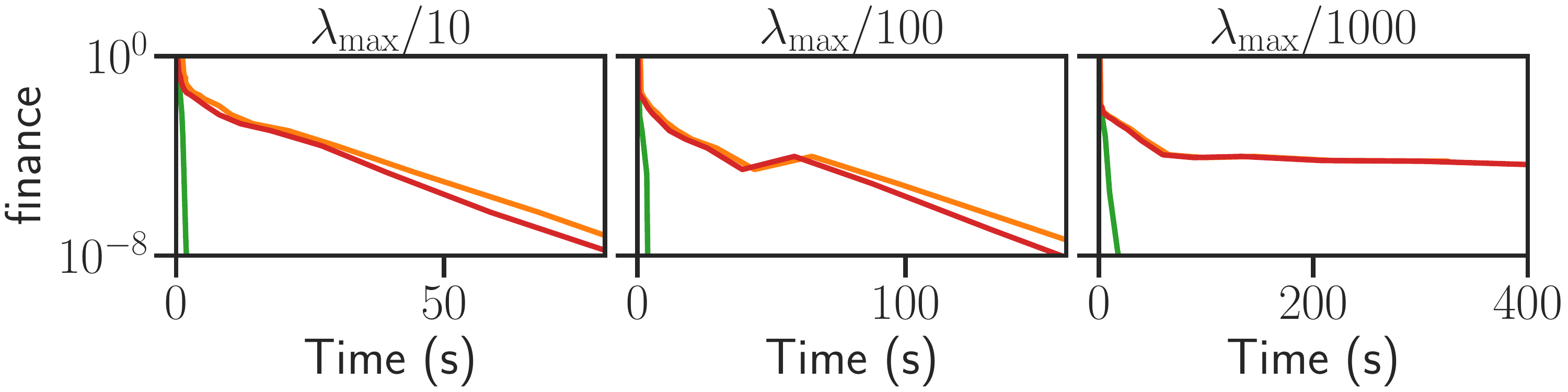}
    \caption{\textbf{Elastic net, duality gap.}
    Normalized duality gap as a function of time for the elastic net for multiple values of $\lambda$, $\rho=0.5$.}
    \label{fig:duality_gap_enet}
\end{figure*}
\begin{figure}[t]
    \centering
    \subfigure[$\ell_{2, 1}$.] {
        \label{fig:meg_l21}
        \includegraphics[scale=0.24]{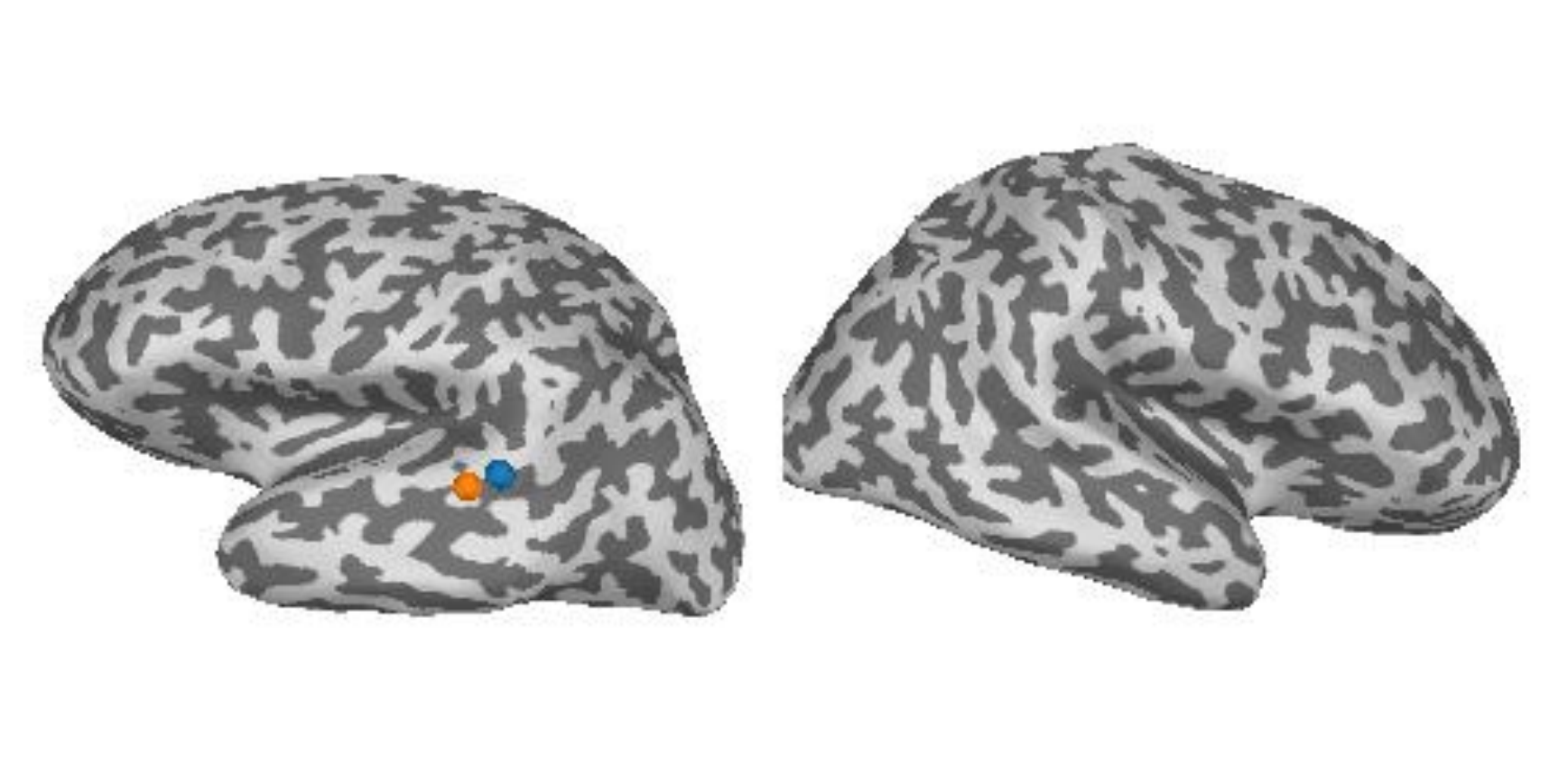}
    }
    \subfigure[$\ell_{2, 0.5}$.] {
        \label{fig:meg_l205}
        \includegraphics[scale=0.24]{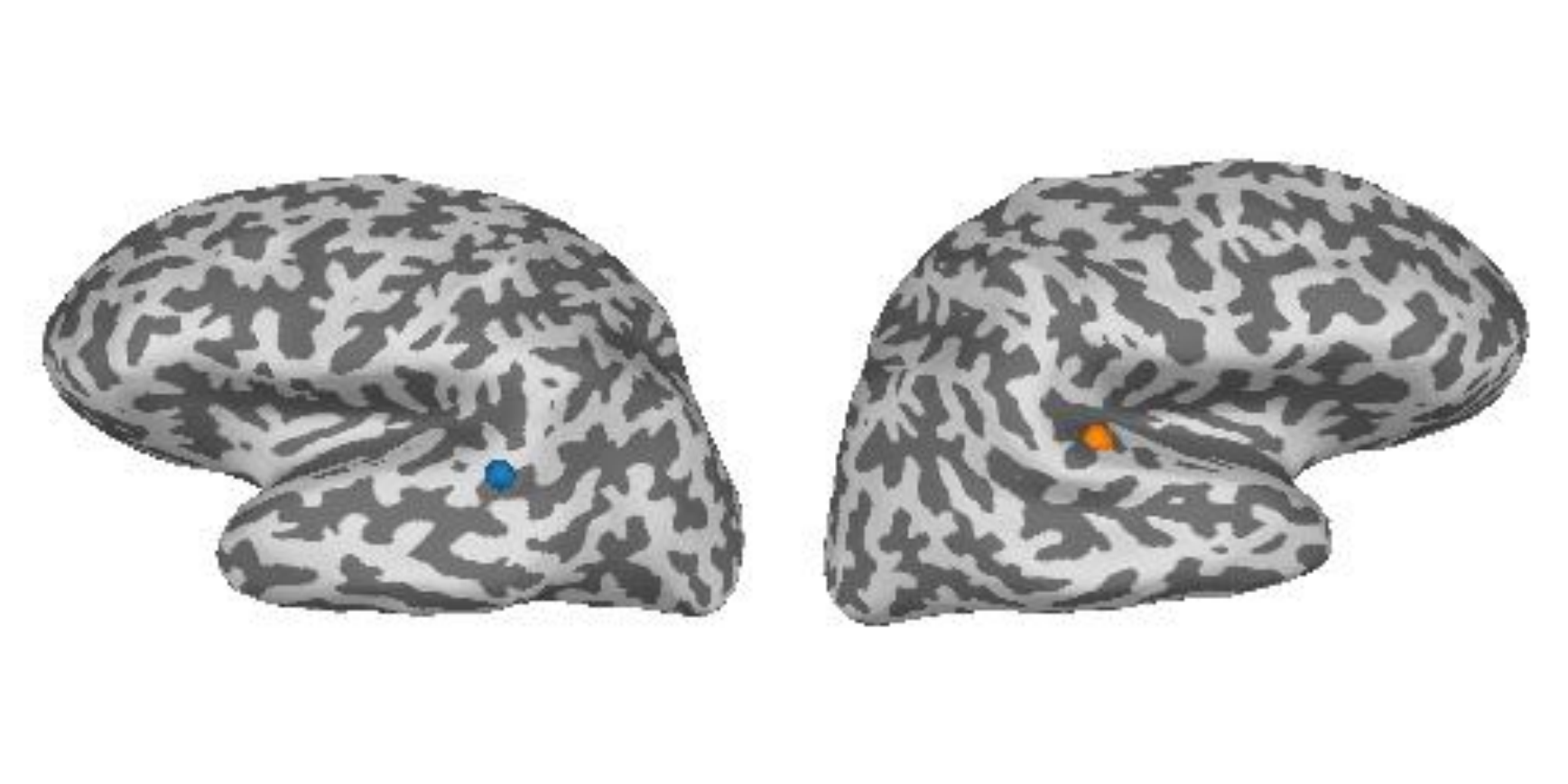}
    }
    \subfigure[Block MCP.] {
        \label{fig:meg_block_mcp}
        \includegraphics[scale=0.24]{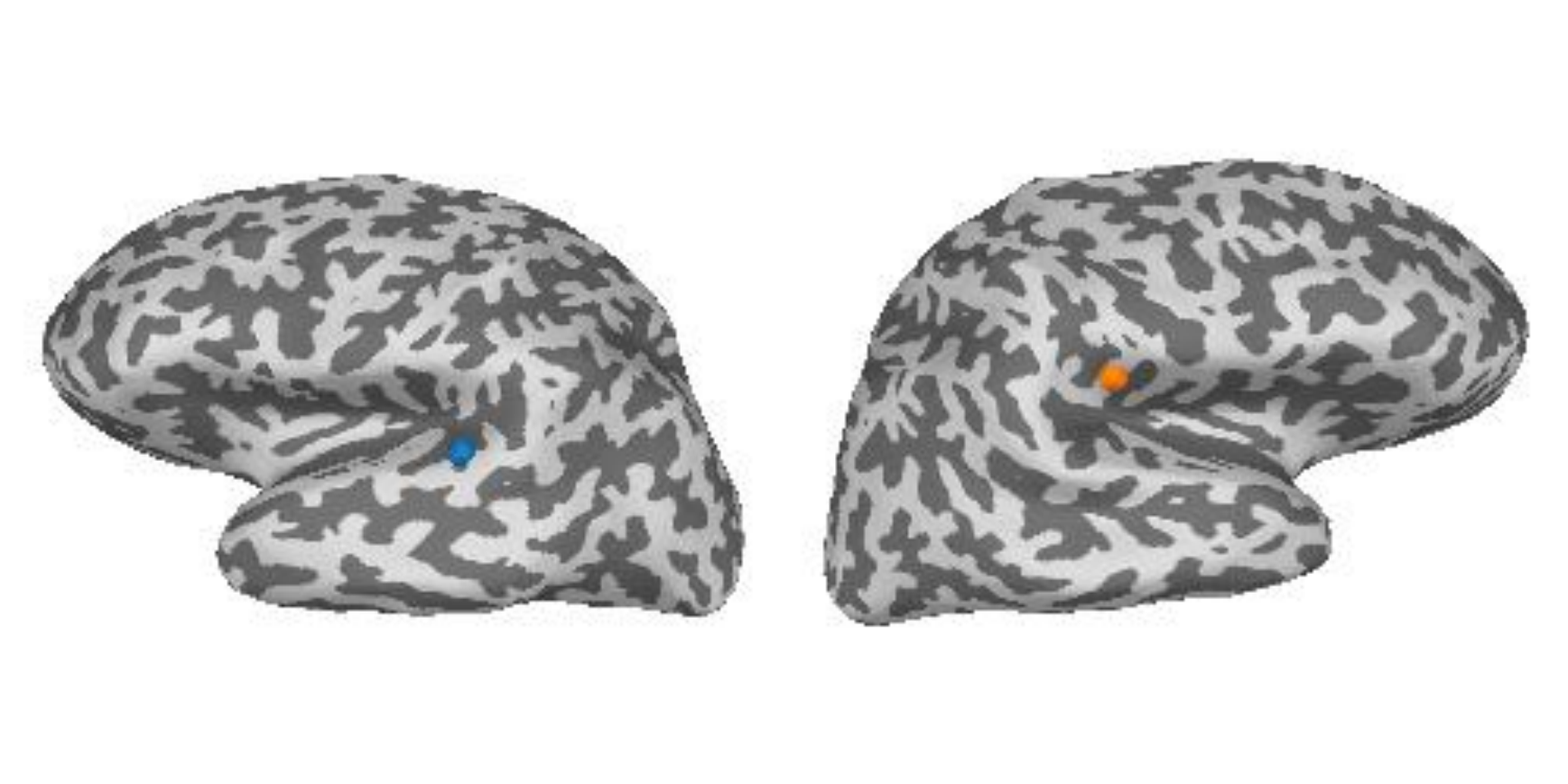}
    }
    \caption[placeholder]{
        \textbf{Real data, brain source locations recovered by convex and non-convex penalties after a right auditory stimulation.}
        \ref{fig:meg_l21} shows that a convex penalty fails at identifying one source in each hemisphere, while \ref{fig:meg_l205} and \ref{fig:meg_block_mcp}
        demonstrates the capability of non-convex penalties to recover the correct solution.
    }
    \label{fig:expes_meg}
\end{figure}
\begin{figure*}[tb]
    \centering
        \includegraphics[width=0.8\linewidth]{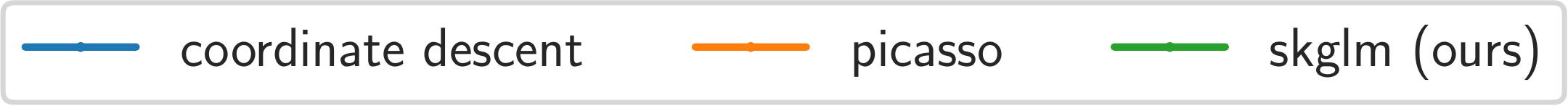}\\
        \includegraphics[width=1\linewidth]{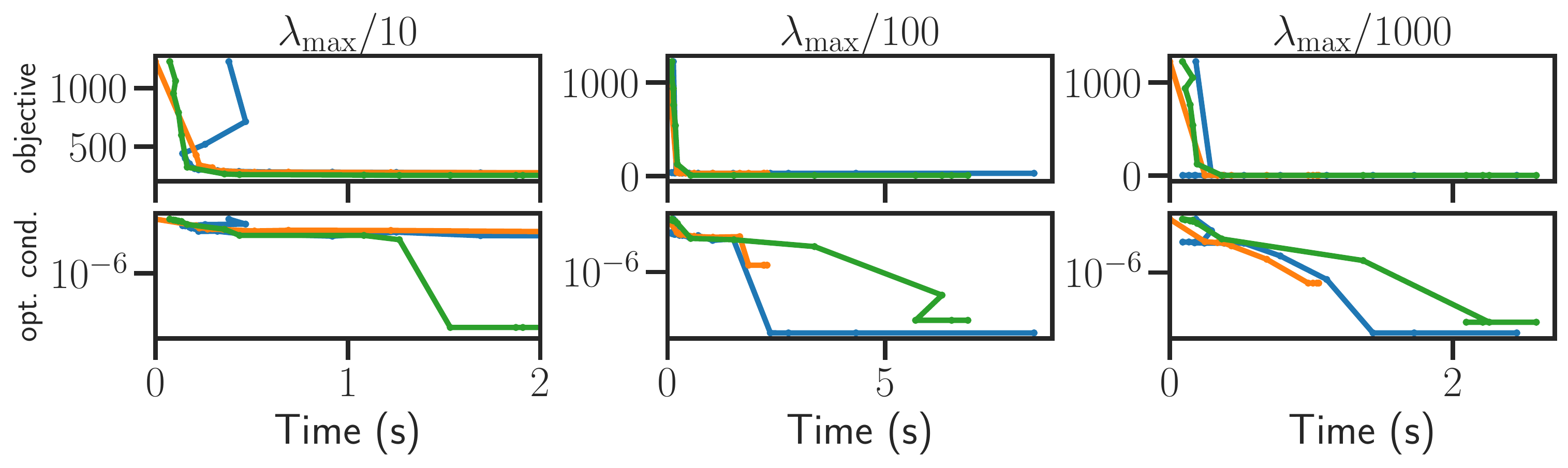}
        \includegraphics[width=0.9\linewidth]{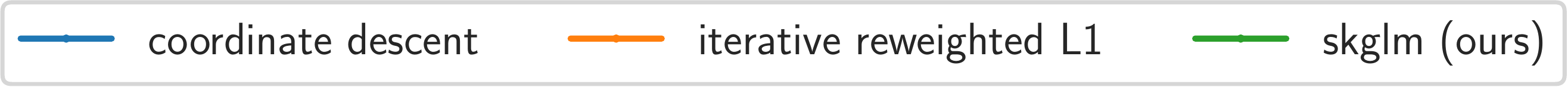}
        \includegraphics[width=1\linewidth]{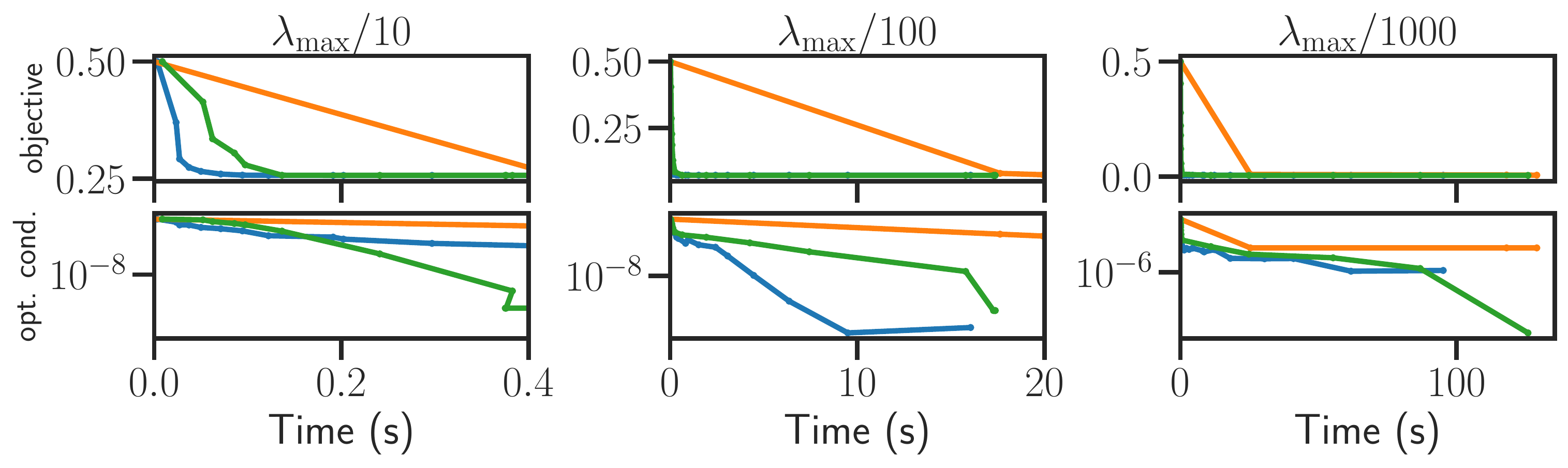}
    \caption{\textbf{MCP, objective value and violation of first order condition.}
    Objective value and violation of optimality condition of the iterates, $\mathrm{dist} (- \nabla f(\beta^{(k)}), \partial g(\beta^{(k)}) )$, as a function of time for the MCP for multiple values of $\lambda$ ($\gamma=3$) on a simulated dense dataset (top) and the rcv1 dataset (normalized columns).}
    \label{fig:mcp}
\end{figure*}

%%%%%%%%%%%%%%%%%%%%%%%%%%%%%%%%%%%%%%%%%%%%%%%%%%%%%%%%%%%%%%%
\paragraph{Elastic net.}
%%%%%%%%%%%%%%%%%%%%%%%%%%%%%%%%%%%%%%%%%%%%%%%%%%%%%%%%%%%%%%%
Our approach easily generalizes to other problems, such as the elastic net ($f = \frac{1}{2n}\norm{y - X\cdot}^2$, $g_j = \lambda (\rho \abs{\cdot} + \frac{1 - \rho}{2} (\cdot)^2)$).
% \begin{equation}
%     \argmin_{\beta \in \bbR^p}
%     \frac{1}{2n} \normin{y - X \beta}^2
%     + \lambda
%         \rho\normin{\beta}_1
%         + \lambda \tfrac{(1 - \rho)}{2} \normin{\beta}_2^2
%     \enspace .
% \end{equation}
\Cref{fig:duality_gap_enet} shows the duality gap as a function of time for \texttt{skglm} (ours), \texttt{sklearn}, and our numba implementation of coordinate descent.
The proposed algorithm is orders of magnitude faster than \texttt{scikit-learn} and vanilla coordinate descent, in particular for large datasets and low regularization parameter values (\emph{finance}, $\lambda_{\max} / 1000$).
Note that \texttt{blitz} does not implement a solver for the elastic net.
Many Lasso solvers would easily handle the elastic net, but relying on Cython/C++ code makes the implementation time-consuming.
By contrast, it takes $40$ lines of code to define an $\ell_1 + \ell_2$-squared penalty with our implementation.
%%%%%%%%%%%%%%%%%%%%%%%%%%%%%%%%%%%%%%%%%%%%%%%%%%%%%%%%%%%%%%%
%
% \paragraph{SVM with hinge loss}
An additional experiment on the dual of SVM with hinge loss is in \Cref{app:expe_svm}.
%%%%%%%%%%%%%%%%%%%%%%%%%%%%%%%%%%%%%%%%%%%%%%%%%%%%%%%%%%%%%%%
\subsection{Non-convex problems}
%%%%%%%%%%%%%%%%%%%%%%%%%%%%%%%%%%%%%%%%%%%%%%%%%%%%%%%%%%%%%%%
In this subsection we propose a comparison on two non convex problems. % the MCP regression problem and an application to neuroscience.
\paragraph{MCP regression.}
MCP regression is \Cref{pb:generic_pb} with $f = \frac{1}{2n} \norm{y - X \cdot}^2$, $g_j = \MCP_{\lambda,\gamma}$ for $\gamma > 1 $.
As usual for this problem, we scale the columns of $X$ to have norm $\sqrt{n}$.
On \Cref{fig:mcp}, we compare our algorithm to \texttt{picasso} on a dense dataset ($n=1000, p=5000$); as this package does not support large sparse design matrices, for the \emph{rcv1} dataset we use an iterative reweighted L1 algorithm \citep{Candes_Wakin_Boyd2008}.
Since the derivative of the MCP vanishes for values bigger than $\lambda \gamma$, this approach requires solving weighted Lassos with some 0 weights.
Up to our knowledge, our algorithm is the only efficient one with such a property.
Our algorithm handles problems of large size, converges to a critical point, and, due to its progressive inclusion of features, is able to reach a sparser critical point than it competitors.

%%%%%%%%%%%%%%%%%%%%%%%%%%%%%%%%%%%%%%%%%%%%%%%%%%%%%%%%%%%%%%%
\paragraph{Application to neuroscience}
%%%%%%%%%%%%%%%%%%%%%%%%%%%%%%%%%%%%%%%%%%%%%%%%%%%%%%%%%%%%%%%

To demonstrate the usefulness of our algorithm for practitioners, we apply it to the magneto-/electroencephalographic (M/EEG) inverse problem.
It consists in reconstructing the spatial cortical current density at the origin of M/EEG measurements made at the surface of the scalp.
Non-convex penalties \citep{Strohmeier_Gramfort_Haueisen2015} exhibit several advantages over convex ones \citep{Gramfort_Strohmeier_Haueisen_Hamalainen_Kowalski2013}: they yield sparser physiologically-plausible solutions and mitigate the $\ell_1$ amplitude bias.
Here the setting is multitask: $Y \in \bbR^{n \times T}$ and thus we use block penalties (details in \Cref{app:prox_multitask}).
We use real data from the \texttt{mne} software \citep{mne}; the experiment is a right auditory stimulation, with two expected neural sources to recover in each auditory cortex.
In \Cref{fig:expes_meg}, while the $\ell_{2, 1}$ penalty fails at localizing one source in each hemisphere, the non-convex penalties recover the correct locations.
This emphasizes on the critical need for fast solvers for non-convex sparse penalties as well as our algorithm's ability to handle the latter.
In this work we focused on optimization-based estimators to solve the inverse problem, note that one could have resort to other techniques, such as Bayesian techniques \citep{Ghosh2017,Fang2020}.

%%%%%%%%%%%%%%%%%%%%%%%%%%%%%%%%%%%%%%%%%%%%%%%%%%%%%%%%%%%%%%%
\textbf{Ablation study.}
%%%%%%%%%%%%%%%%%%%%%%%%%%%%%%%%%
To evaluate the influence of the two components of \Cref{alg:skglm}, an ablation study (\Cref{fig:abla_study}) is performed.
Four algorithms are compared: with/without working sets and with/without Anderson acceleration.
% The algorithm without working sets and acceleration is the usual coordinate descent algorithm, while working sets combined Anderson accelerated coordinate descent corresponds to the proposed \Cref{alg:skglm}.
\Cref{fig:abla_study} represents the duality gap of the Lasso as a function of time for multiple datasets and values of the regularization parameters $\lambda$ (parametrized as a fraction of $\lambda_{\text{max}}$).
First, \Cref{fig:abla_study} shows that working sets always bring significant speedups.
Then, when combined with working set, Anderson acceleration bring significant speed-ups, especially for hard problems with low regularization parameters.
An interesting observation is that on large scale datasets (\emph{news20} and \emph{finance}) and for low regularization parameters ($\lambda_{\max} / 100$ and $\lambda_{\max} / 1000$) Anderson acceleration \emph{without} working set does not bring acceleration.
This highlights the importance of combining Anderson acceleration with working sets.
%

%%%%%%%%%%%%%%%%%%%%%%%%%%%%%%%%%%%%%%%%%%%%%%%%%%%%%%%%%%%%%%%
\begin{figure*}[tb]
    \centering
        \centering
        \includegraphics[width=0.7\linewidth]{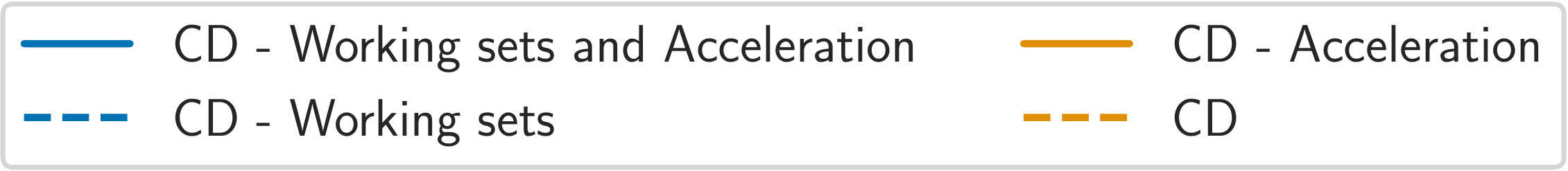}
        \includegraphics[width=0.9\linewidth]{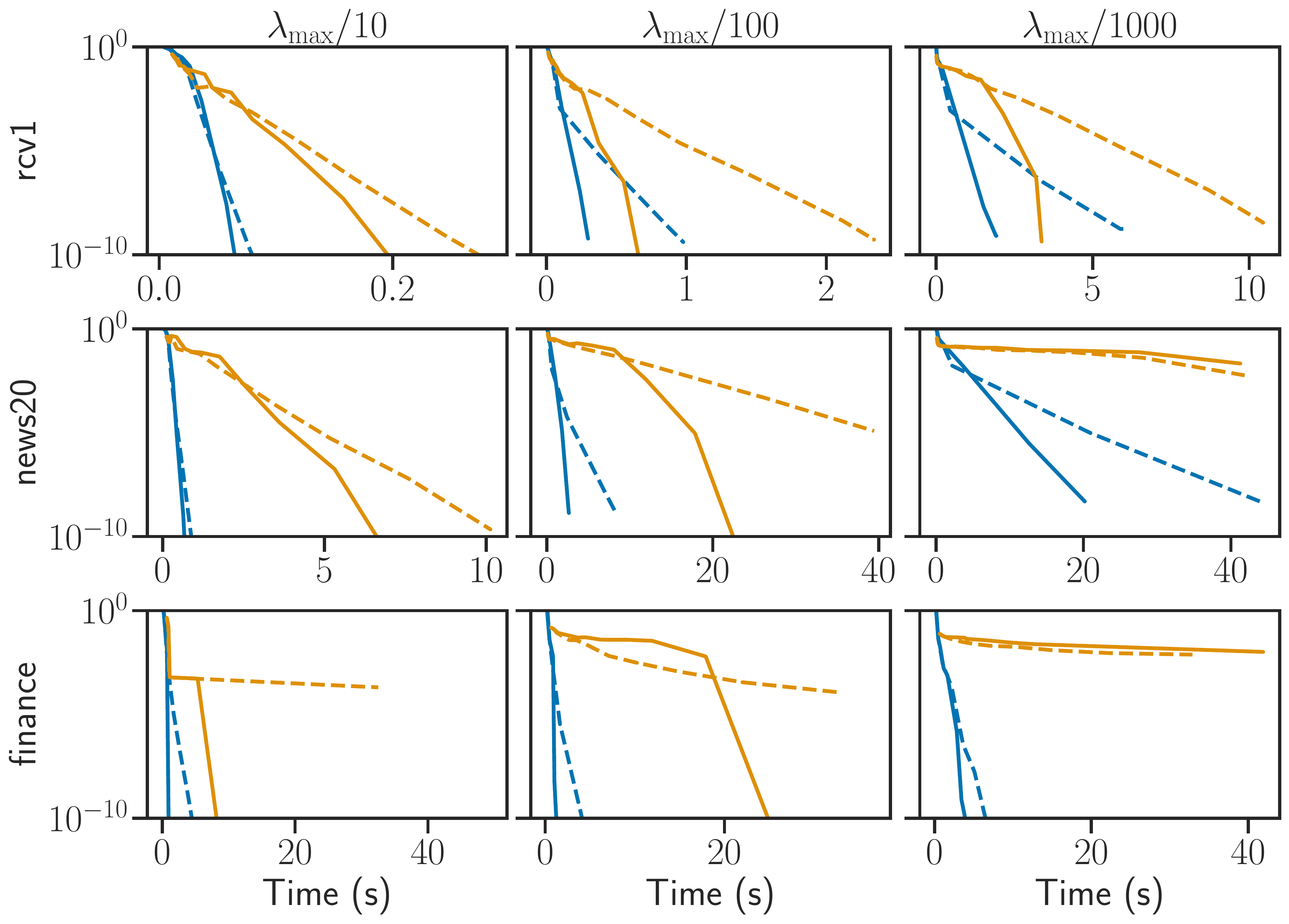}
    \caption{\textbf{Lasso, duality gap.}
    Normalized duality gap as a function of time for the Lasso. %, for multiple values of $\lambda$.
    }
    \label{fig:abla_study}
\end{figure*}
%%%%%%%%%%%%%%%%%%%%%%%%%%%%%%%%%%%%%%%%%%%%%%%%%%%%%%%%%%%%%%%

\paragraph{Conclusion and broader impact.}
In this paper, we have proposed an accelerated versatile algorithm for a specific class of non-smooth non-convex problems.
Based on working sets, coordinate descent and Anderson acceleration, we have improved state of the art on convex problems, and handled previously out-of-reach problems.
Thorough experiments demonstrated the speed and interest of our approach.
% As further directions of research, we plan on investigating the non-convexbiases of our algorithm, i.e. the characterization of its limit.
% \textbf{Limitations.}
A limitation of this work is the considered function class ($\alpha$-semi-convex), which can be seen as restrictive.
One possible extension would be weakly convex functions \citep[Sec. 1]{Davis_Drusvyatskiy2019}.
% \textbf{Broader impact.}
We deeply believe that the high quality code provided will benefit to practitioners, and ease the use of non-convex penalties for real world problems, from neuroimaging to genomics.
We proposed an optimization algorithm and do not see potential negative societal impacts.
\section*{Acknowledgements}
The experiments were ran on the CBP cluster of ENS de Lyon  \citep{quemener2013sidus}.
QB would like to thank Samsung Electronics Co., Ldt. for funding this research.
GG is supported by an IVADO grant.

\clearpage

\bibliographystyle{plainnat}
\bibliography{biblio_flashcd}

\newpage
%! TEX root=../main.tex

%%%%%%%%%%%%%%%%%%%%%%%%%%%%%%%%%%%%%%%%%%%%%%%%%%%%%%%%%%%%
\section*{Checklist}

\begin{enumerate}

\item For all authors...
\begin{enumerate}
  \item Do the main claims made in the abstract and introduction accurately reflect the paper's contributions and scope?
    \answerYes{}
  \item Did you describe the limitations of your work?
    \answerYes{See limitations paragraph}
  \item Did you discuss any potential negative societal impacts of your work?
    \answerYes{}
  \item Have you read the ethics review guidelines and ensured that your paper conforms to them?
    \answerYes{}
\end{enumerate}

\item If you are including theoretical results...
\begin{enumerate}
  \item Did you state the full set of assumptions of all theoretical results?
  \answerYes{See \Cref{prop:finite_identification,prop:dl_fixed_point,prop:acc_lin_conv}.}
        \item Did you include complete proofs of all theoretical results?
    \answerYes{See \Cref{app:missing_proofs}.}
\end{enumerate}

\item If you ran experiments...
\begin{enumerate}
  \item Did you include the code, data, and instructions needed to reproduce the main experimental results (either in the supplemental material or as a URL)?
    \answerYes{}
  \item Did you specify all the training details (e.g., data splits, hyperparameters, how they were chosen)?
    \answerYes{See \Cref{sec:experiments}.}
        \item Did you report error bars (e.g., with respect to the random seed after running experiments multiple times)?
    \answerNA{}
        \item Did you include the total amount of compute and the type of resources used (e.g., type of GPUs, internal cluster, or cloud provider)?
    \answerNA{}
\end{enumerate}

\item If you are using existing assets (e.g., code, data, models) or curating/releasing new assets...
\begin{enumerate}
  \item If your work uses existing assets, did you cite the creators?
    \answerYes{In particular we acknowledge the \text{python} ecosystem.}
  \item Did you mention the license of the assets?
    \answerNA{}
  \item Did you include any new assets either in the supplemental material or as a URL?
    \answerYes{}
  \item Did you discuss whether and how consent was obtained from people whose data you're using/curating?
   \answerNA{}
  \item Did you discuss whether the data you are using/curating contains personally identifiable information or offensive content?
   \answerNA{}
\end{enumerate}

\item If you used crowdsourcing or conducted research with human subjects...
\begin{enumerate}
  \item Did you include the full text of instructions given to participants and screenshots, if applicable?
    \answerNA{}
  \item Did you describe any potential participant risks, with links to Institutional Review Board (IRB) approvals, if applicable?
    \answerNA{}
  \item Did you include the estimated hourly wage paid to participants and the total amount spent on participant compensation?
    \answerNA{}
\end{enumerate}

\end{enumerate}

%TODO should checklist be removed?

%%%%%%%%%%%%%%%%%%%%%%%%%%%%%%%%%%%%%%%%%%%%%%%%%%%%%%%%%%%%

\clearpage
\appendix
%! TEX root=../main.tex

\onecolumn
%%%%%%%%%%%%%%%%%%%%%%%%%%%%%%%%%%%%%%%%%%%%%%%%%%%%%%%%%%%%%%%
%%%%%%%%%%%%%%%%%%%%%%%%%%%%%%%%%%%%%%%%%%%%%%%%%%%%%%%%%%%%%%%
\section{Algorithms}
\label{app:algorithms}
%%%%%%%%%%%%%%%%%%%%%%%%%%%%%%%%%%%%%%%%%%%%%%%%%%%%%%%%%%%%%%%
%%%%%%%%%%%%%%%%%%%%%%%%%%%%%%%%%%%%%%%%%%%%%%%%%%%%%%%%%%%%%%%
\hspace*{2mm}
\begin{minipage}{0.48\linewidth}
\begin{algorithm}[H]
    \SetKwInOut{Init}{init}
    \SetKwInOut{Input}{input}
    \setcounter{AlgoLine}{0}
    \Input{
        $X \in \bbR^{n \times p},
        \beta \in \bbR^p,
        X\beta \in \bbR^n,
        L \in \bbR^p,
        \mathrm{ws} \subset [p]$}
    \caption{Coordinate descent epoch}\label{alg:cd_epoch}
    \For{$j \in \mathrm{ws}$}{
        $\beta_{\mathrm{old}} \leftarrow \beta_j$
        \tcp*{$\bigo(1)$}

        $\beta_j \leftarrow
        \prox_{g_j / L_j} \left (
            \beta_j - \frac{1}{L_j} X_{:j}^\top\nabla F(X\beta)
         \right )$
         \tcp*{$\bigo(n)$}

        $X\beta \pluseq (\beta_j - \beta_{\mathrm{old}}) X_{:j}$
        \tcp*{$\bigo(n)$}
    }
  \Return{$\beta$}
\end{algorithm}
\end{minipage}
\hspace*{5mm}
\begin{minipage}{0.48\linewidth}
\begin{algorithm}[H]
    \caption{Anderson extrapolation}\label{alg:extrapolation}
    \SetKwInOut{Init}{init}
    \setcounter{AlgoLine}{0}
    \Init{
        $
        \beta^{(0)}, \dots, \beta^{(M)} \in \bbR^{|\mathrm{ws}| \times (M+1)}$}

        $U = [\beta^{(1)} - \beta^{(0)}, \ldots, \beta^{(M)} - \beta^{(M - 1)}] \in \bbR^{|\mathrm{ws}| \times M}$

        $c = (U^\top U)^{-1} \mathbf{1}_M  \in \bbR^M$
        \tcp*{$\bigo(M^2 |\mathrm{ws}| + M^3)$}

        $c \diveq \mathbf{1}_M^\top c$
        \tcp*{$\bigo(M)$}

        $\beta^{\mathrm{extr}} = \sum_{i=1}^M c_i \beta^{(i)}
        \in \bbR^{|\mathrm{ws}|}
        $
        \tcp*{$\bigo(M |\mathrm{ws}|)$}

    \Return{$\beta^{\mathrm{extr}}$}
\end{algorithm}
\end{minipage}

%%%%%%%%%%%%%%%%%%%%%%%%%%%%%%%%%%%%%%%%%%
\section{Proofs and propositions}
\label{app:missing_proofs}
%%%%%%%%%%%%%%%%%%%%%%%%%%%%%%%%%%%%%%%%%%%%%%%%%%%%%%%%%%%
\subsection{Working set convergence (\Cref{prop:conv_ws})}
\label{app:sub_conv_ws}
%%%%%%%%%%%%%%%%%%%%%%%%%%%%%%%%%%%%%%%%%%%%%%%%%%%%%%%%%%%
\begin{proof}
    Since $\mathcal{W}_t \subset \mathcal{W}_{t+1}$ after at most $p$ iterations, the working set is made of all the $p$ features.
    If \Cref{alg:skglm} stops when $|\mathcal{W}_t|< p$, then \citet[Thm. 3.1]{Bolte_Sabach_Teboulle2014} ensures that the inner solver converges towards a critical point of the restricted subproblem.
    Moreover, if the working set stops increasing, it means that for all $j\notin \mathcal{W}_t$,
    $\mathrm{score}_j^\partial$
    % computed in \Cref{eq:score_features} or in \Cref{eq:score_features_nncvx}
    is smaller than a given tolerance, hence satisfying the critical point condition of the global optimization problem.

    If $|\mathcal{W}_t|= p$, the inner solver is used on the full optimization problem and \citet[Thm. 3.1]{Bolte_Sabach_Teboulle2014} ensures convergence towards a critical point of the latter.%
\end{proof}
%%%%%%%%%%%%%%%%%%%%%%%%%%%%%%%%%%%%%%%%%%
\subsection{$\alpha$-semi-convexity of MCP (\Cref{prop:mpc_alpha_semicvx})}
\label{app:sub_alpha_semisvx_mcp}
%%%%%%%%%%%%%%%%%%%%%%%%%%%%%%%%%%%%%%%%%%
\begin{proof}
    Let $j \in [p]$, $\gamma > \min_j 1 / L_j$ and
    $
        g_j
        \eqdef
        \MCP_{\lambda, \gamma}: x \mapsto
        \begin{cases}
            \lambda \abs{x} - \frac{x^2}{2 \gamma} \enspace,
            &\text{if}\: \abs{x} \leq \gamma \lambda \\
                \frac{1}{2} \gamma \lambda^2 \enspace,
            &\text{if}\: \abs{x} > \gamma \lambda \enspace
        \end{cases}
    $, $\alpha = \frac{1}{2} \left (1 + \frac{1}{\gamma L_j}\right)$, and $h_j \eqdef \frac{g_j}{L_j} + \frac{\alpha}{2} \norm{\cdot}^2$.

    Since $\gamma > \min_j 1 / L_j$, $\alpha < 1$.
    Moreover, for all $x > 0$ such that, $\abs{x} \leq \gamma \lambda$,
    \begin{align}
        h_j(x)
        &=
        \frac{\lambda \abs{x}}{L_j}
        - \frac{x^2}{2 \gamma L_j}
        +  \frac{\alpha x^2}{2}
        , \: \text{thus}
        \\
        h_j'(x)
        &=
        \frac{\lambda}{L_j}
        +
        \frac{1}{2} \left (1 - \frac{1}{\gamma L_j}\right) x
        , \: \text{thus}
        % \\
        % h_j''(x)
        % &= \alpha - \frac{1}{\gamma L_j}
        % \\
        % h_j''(x)
        % &=  \frac{1}{2} \left (1 - \frac{1}{\gamma L_j}\right)
        % > 0
        \enspace .
    \end{align}
    In addition,  for all $x >0$ such that, $\abs{x} > \gamma \lambda$
    \begin{align}
        h_j(x)
        &=
        \frac{1}{2} \gamma \lambda^2
        +  \frac{\alpha x^2}{2},
        \: \text{thus}
        \\
        h_j'(x)
        &=
        \frac{1}{2} \left (1 + \frac{1}{\gamma L_j}\right) x
        % \\
        % h_j''(x)
        % &=
        % \frac{1}{2} \left (1 + \frac{1}{\gamma L_j}\right)
        \enspace .
    \end{align}
    Hence $h'$ is an increasing function on $]0, + \infty[$, and thus $h$ is convex on $]0, + \infty[$.
    In addition, h is increasing, symmetric, and continuous, thus $h$ is convex on $\bbR$, and MCP is $\alpha$-semi-convex.
\end{proof}

%%%%%%%%%%%%%%%%%%%%%%%%%%%%%%%%%%%%%%%%%%%
\subsection{Support identification for coordinate descent (\Cref{prop:finite_identification})}
\label{app:sub_suppid_cd}
%%%%%%%%%%%%%%%%%%%%%%%%%%%%%%%%%%%%%%%%%%
For exposition purposes, the proof is first provided for proximal gradient descent.
\begin{proof}
    \textbf{Proximal gradient descent.}
    Here we generalize the results of \citet[Sec. 6.2.2]{Nutini2018} to semi-convex $g_j$'s.
    The updates of proximal gradient descent read:
    \begin{align}\label{eq:update_pgd}
        \beta_j^{(k+1)}
        = \prox_{g_j / L} \left (
            \beta_j^{(k)} - \frac{1}{L}\nabla_j f(\beta^{(k)})
            \right)
        \enspace .
    \end{align}
    Let $\cS$ be the generalized support of $\hat \beta$.
    Using  \Cref{ass:non_degeneracy}, we have that for $ j \notin \cS$,
    \begin{align} \label{eq:interior_subdiff}
         - \nabla_j f(\hat \beta)
         &
         \in \mathrm{interior}(\partial g_j(\hat \beta_j)) \enspace .
    \end{align}
    Combining \Cref{eq:interior_subdiff} with the Lipschitz continuity of the gradient (\Cref{ass:f}) and the convergence of $ (\beta^{(k)})$ toward $\hat \beta$
    yields that there exists $k \in \bbN$ such that
    \begin{align}\label{eq:update_pgd_subdiff_opt}
        L( \beta_j^{(k)}
         - \hat \beta_j)
         - \nabla_j f(\beta^{(k)})
             \in \partial g_j(\hat \beta_j)
         \enspace .
     \end{align}
     Since $g_j / L$ is $\alpha$-semi-convex with $\alpha < 1$, \Cref{eq:update_pgd_subdiff_opt} is equivalent to
     \begin{align}\label{eq:update_pgd_opt}
        \hat \beta_j
        = \prox_{g_j / L} \left (
            \beta_j^{(k)} - \frac{1}{L}\nabla_j f(\beta^{(k)})
            \right)
        \enspace .
    \end{align}
    By uniqueness of the proximity operator (direct consequence of \Cref{ass:alpha_semi_cvx}), \Cref{eq:update_pgd,eq:update_pgd_opt} yield that there exists $K \in \bbN$ such that for all $k \geq K$, $\beta_j^{(k)} = \hat \beta_j$.

    The proof for coordinate descent is similar and can be found in \Cref{app:sub_suppid_cd}.
\end{proof}

We prove the support identification for the coordinate descent algorithm \Cref{prop:finite_identification}.

\begin{proof}
    \textbf{Proximal coordinate descent.}
    Let us denote by $\beta^{(k, j)}$ the update at the epoch $k$ and changing the coordinate $j$ with the convention that $\beta^{(k, 0)} = \beta^{(k)}$ and $\beta^{(k, p)} = \beta^{(k+1)}$.
    An update of proximal coordinate descent reads
    \begin{align}\label{eq:update_pcd}
        \beta^{(k, j)} = \prox_{g_j/L_j}\left( \beta^{(k, j-1)} - \frac{1}{L_j}\nabla_j f(\beta^{(k, j-1)})\right) \enspace .
    \end{align}
    % which implies that
    % \begin{align}\label{eq:update_pcd_subdiff}
    %     L_j(\beta^{(k, j-1)} - \beta^{(k, j)}) - \nabla_j f(\beta^{(k, j)}) \in \partial g_j(\beta^{(k, j)}) \enspace .
    % \end{align}
    %
    Let $\cS$ be the generalized support of $\hat \beta$, a critical point of \Cref{pb:generic_pb}.

    Using  \Cref{ass:non_degeneracy}, we have that for $ j \notin \cS$,
    \begin{align} \label{eq:interior_subdiff2}
        - \nabla_j f(\hat \beta)
        &
        %  \in \ri \partial g_j(\hat \beta_j)
        \in \mathrm{interior}(\partial g_j(\hat \beta_j)) \enspace .
        %  \\
        %  - \nabla_j f(\beta^{(k)})
    \end{align}
    Combining \Cref{eq:interior_subdiff2} with the Lipschitz continuity of the gradient (\Cref{ass:f}) and the convergence of $ (\beta^{(k)})$ toward $\hat \beta$
    yields that there exists $k \in \bbN$ such that
    \begin{align}\label{eq:update_pgd_subdiff_opt2}
        L_j( \beta_j^{(k, j-1)}
        - \hat \beta_j)
        - \nabla_j f(\beta^{(k, j-1)})
            \in \partial g_j(\hat \beta_j)
        \enspace .
    \end{align}
    Since $g_j / L$ is $\alpha$-semi-convex with $\alpha < 1$, \Cref{eq:update_pgd_subdiff_opt2} is equivalent to
    \begin{align}\label{eq:update_pgd_opt2}
        \hat \beta_j
        = \prox_{g_j / L_j} \left (
            \beta_j^{(k, j-1)} - \frac{1}{L_j}\nabla_j f(\beta^{(k, j-1)})
            \right)
        \enspace .
    \end{align}
    By uniqueness of the proximity operator (direct consequence of \Cref{ass:alpha_semi_cvx}), \Cref{eq:update_pcd,eq:update_pgd_opt2} yield that there exists $K \in \bbN$ such that for all $k \geq K$, $\beta_j^{(k)} = \hat \beta_j$.
\end{proof}

%%%%%%%%%%%%%%%%%%%%%%%%%%%%%%%%%%%%%%%%%%
\subsection{Local linear convergence (\Cref{prop:dl_fixed_point})}
\label{app:sub_local_lin_conv}
%%%%%%%%%%%%%%%%%%%%%%%%%%%%%%%%%%%%%%%%%
%
Here we extend in the proof of local linear convergence of coordinate descent from \citealt{Klopfenstein2020} to the $\alpha$-semi-convex case.
This property will be useful to show \Cref{prop:acc_lin_conv}.
\begin{proposition}\label{prop:dl_fixed_point}
    \sloppy
    Consider a critical point $\hat \beta$ and suppose
    \begin{enumerate}[itemsep=-2pt,topsep=-2pt]
        \item \Cref{ass:f,ass:g,ass:non_empty,ass:alpha_semi_cvx} hold.
        \item The sequence $(\beta^{(k)})_{k\geq 0}$ generated by coordinate descent (\Cref{alg:subproblem} without extrapolation) converges to a critical point $\hat \beta$.
        \item Suppose \Cref{ass:non_degeneracy,ass:locally_c2,ass:local_str_cvx} hold for $\hat \beta$.
    \end{enumerate}
    Then there exists
    $K \in \mathbb{N}$, and a $\mathcal{C}^1$ function
    $\psi: \mathbb{R}^{|\cS|} \to \mathbb{R}^{|\cS|}$
    such that, for all $k \in \mathbb{N}, k \geq K$:
    \begin{align}
        \beta_j^{(k)}
        =
        \hat \beta_j
        \text{ , for all $j \in \cS^c$, }
        \text{ and }
        \,
        % \\
        \beta_{\cS}^{(k+1)}
        -
        \hat \beta_{\cS}
        =
        \cJ \psi (\hat \beta_{\cS})
        (\beta_{\cS}^{(k)} -
        \hat \beta_{\cS})
        + \bigo(\normin{\beta_{\cS}^{(k)} - \hat \beta_{\cS}}^2 )
        \enspace ,
        \nonumber
    \end{align}
    and
    % \begin{align}
        $\rho \left (
            \cJ \psi (\hat \beta_{\cS})
        \right )
        <
        1
        \enspace ,$
    % \end{align}
    where $\cJ \psi$ is the Jacobian of $\psi$,
    and $\rho$ its spectral radius.
\end{proposition}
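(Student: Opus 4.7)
The plan is to combine the finite-time support identification of \Cref{prop:finite_identification} with a smoothness analysis of one full coordinate descent epoch around $\hat\beta$, and to conclude via a classical convergent-splitting argument.

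First, I would apply \Cref{prop:finite_identification} — all of its hypotheses are among those assumed here — to obtain $K\in\bbN$ with $\beta^{(k)}_{\cS^c} = \hat\beta_{\cS^c}$ for every $k\geq K$. This gives the first displayed identity immediately and, for $k\geq K$, reduces the iteration to coordinate descent on the restricted functional $\tilde\Phi(\beta_\cS) \eqdef f(\beta_\cS,\hat\beta_{\cS^c}) + \sum_{j\in\cS} g_j(\beta_j)$. The one-epoch map can then be viewed as $\psi: \bbR^{|\cS|} \to \bbR^{|\cS|}$ with $\hat\beta_\cS$ as fixed point.

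Second, I would establish the $\mathcal{C}^2$ regularity of $\psi$ and obtain the first-order expansion. By \Cref{ass:locally_c2} each $g_j$ with $j\in\cS$ is $\mathcal{C}^3$ near $\hat\beta_j$, and $\alpha$-semi-convexity with $\alpha<1$ yields $L_j + \nabla^2 g_j(\hat\beta_j) > 0$. The implicit function theorem applied to the optimality relation $L_j(z - u) + \nabla g_j(z) = 0$ that defines $z = \prox_{g_j/L_j}(u)$ produces a local $\mathcal{C}^2$ expression for the proximity operator. Composing with the $\mathcal{C}^2$ gradient step $\beta \mapsto \beta_j - \frac{1}{L_j}\nabla_j f(\beta)$ yields a $\mathcal{C}^2$ one-coordinate update $\phi_j$, hence a $\mathcal{C}^2$ epoch map $\psi = \phi_{|\cS|} \circ \cdots \circ \phi_1$. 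Taylor expanding at the fixed point $\hat\beta_\cS$ produces the second displayed identity with the $\bigo(\normin{\beta^{(k)}_\cS - \hat\beta_\cS}^2)$ remainder.

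Third, I would bound $\rho(\cJ\psi(\hat\beta_\cS)) < 1$. A direct differentiation gives $\cJ\phi_j|_{\hat\beta} = \Id - d_j^{-1} e_j e_j^\top H$ with $d_j \eqdef L_j + \nabla^2 g_j(\hat\beta_j)$ and $H \eqdef \nabla^2_{\cS\cS} f(\hat\beta) + \nabla^2_{\cS\cS} g(\hat\beta)$, so chaining the factors in sweep order yields $T \eqdef \cJ\psi(\hat\beta_\cS) = \Id - (D + H_l)^{-1} H$, where $D = \diag(d_j)$ and $H_l$ is the strictly lower triangular part of $H$. This is precisely the iteration matrix of a damped Gauss-Seidel splitting $H = M - N$ with $M = D + H_l$. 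The coordinatewise Lipschitz hypothesis (\Cref{ass:f}) gives $L_j \geq \nabla^2_{jj} f(\hat\beta)$, hence $d_j \geq H_{jj}$ and $M + M^\top - H = 2D - H_d \succ 0$; combined with $H \succ 0$ (\Cref{ass:local_str_cvx}), the classical convergent-splitting criterion ($H\succ 0$ and $M+M^\top-H\succ 0$ together imply $\rho(M^{-1}N)<1$) delivers $\rho(T) < 1$.

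The main obstacle is Step 3: verifying carefully that the sequential product of rank-one factors $\Id - d_j^{-1} e_j e_j^\top H$ collapses to the damped Gauss-Seidel iteration matrix is an algebraic identity reflecting the in-place nature of a CD sweep, and requires a short but careful manipulation. Once that identity is in place, the convergent-splitting criterion closes the argument.
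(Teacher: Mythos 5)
Your proposal is correct, and Steps 1--2 (identification via \Cref{prop:finite_identification}, then $\mathcal{C}^2$ regularity of the prox through the inverse/implicit function theorem applied to $z \mapsto z + \gamma_j g_j'(z)$, whose derivative $1+\gamma_j g_j''(\hat\beta_j)>0$ by \Cref{ass:alpha_semi_cvx}) coincide with the paper's argument. Step 3, however, takes a genuinely different route to $\rho(\cJ\psi(\hat\beta_\cS))<1$. The paper keeps the Jacobian as the ordered product $M^{-1/2}(\Id-B^{(|\cS|)})\cdots(\Id-B^{(1)})M^{1/2}$ with $B^{(s)}$ a rank-one matrix of norm at most one, and shows that a vector whose norm is preserved by every factor must lie in $\bigcap_s \Span(M^{1/2}_{:s})^\perp=\{0\}$ since $M^{1/2}$ has full rank; this costs a chain of small lemmas but never leaves the conjugated frame. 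You instead collapse the product $\prod_j(\Id-d_j^{-1}e_je_j^\top H)$ into the damped Gauss--Seidel matrix $\Id-(D+H_l)^{-1}H$ and invoke the Householder--John (P-regular splitting) criterion: $H\succ0$ from \Cref{ass:local_str_cvx}, and $M+M^\top-H=2D-H_d=(D-H_d)+D\succ0$ because $L_j\geq\nabla^2_{jj}f(\hat\beta)$ gives $D\succeq H_d$ and $d_j=L_j+g_j''(\hat\beta_j)>0$ follows from $\alpha<1$. Both proofs rest on exactly the same two positivity facts; yours is shorter once the collapse identity (which you correctly flag as the delicate point, and which does hold --- it is the standard recursion $(D+H_l)r^{\mathrm{new}}=(D+H_l-H)r^{\mathrm{old}}$ for an in-place sweep) is written out, and it connects the result to classical splitting theory, whereas the paper's product-of-near-projectors argument is self-contained and is reused verbatim for the symmetrized sweep needed in \Cref{prop:acc_lin_conv}.
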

%
% Proof of \Cref{prop:dl_fixed_point} is in \Cref{app:sub_local_lin_conv}.
% Note that \Cref{ass:local_str_cvx} implies an $1/$ L
Note that under the hypothesis that $\Phi$ is $1/2$-{\L}ojasiewicz, local linear convergence can be provided by \citet[Remark 3.4]{Bolte_Sabach_Teboulle2014}.

\begin{proof}

Let $\gamma_j = 1 / L_j$.
Let $\cS$ be the generalized support.
Its elements are numbered as follows:
    \begin{math}
        \cS = \{j_1, \hdots, j_{|\cS|}\}
    \end{math}.
    We also define $\pi : \bbR^{|\cS|} \to \bbR^p$ for all $\beta_{\cS} \in \bbR^{|\cS|}$ and all $j \in \cS$ by
    \begin{align}
        \pi(\beta_{\cS})_j =
        \begin{cases}
            \beta_j,
            & \text{if } j \in \cS \\
            \hat \beta_j,
            & \text{if } j \in \cS^c
            \enspace,
        \end{cases}
        \nonumber
    \end{align}
    and for all
    $s \in [|\cS|]$, $\cP^{(s)} : \bbR^{|\cS|} \to \bbR^{|\cS|}$
    is defined for all
    $u \in \bbR^{|\cS|}$
    and all $s' \in [|S|]$ by
    \begin{align}
        \left(
            \cP^{(s)} (u)
        \right)_{s'}
        =
        \begin{cases}
            u_{s'} &
            \text{if } s \neq s' \\
            \prox_{\gamma_{j_s} g_{j_s}}
            \left( u_{s} - \gamma_{j_s}\nabla_{j_s} f(\pi(u)) \right) &
            \text{if } s = s' \enspace.
        \end{cases}
        \nonumber
    \end{align}
    Once the model is identified (\Cref{prop:finite_identification}), we have that there exists $K\geq 0$ such that for all $k\geq K$
    \begin{align}
        \beta^{(k)}_{\cS^c}
        &=
        \hat \beta_{\cS^{c}}
        \quad \text{and} \quad
        \nonumber
        \\
        \beta^{(k+1)}_\cS
        ={\psi} ( \beta^{(k)}_\cS)
        & \eqdef
        \cP^{(|\cS|)} \circ \hdots \circ \cP^{(1)}(\beta^{(k)}_\cS)
        \enspace .
        \label{eq:def_fixed_point_suppid}
    \end{align}
The proof of \Cref{prop:dl_fixed_point} follows three steps:
\begin{itemize}
    \item First we show that the fixed-point operator ${\psi}$ is differentiable at $\hat \beta_{\cS}$ (\Cref{lemma:diff_prox}).
    \item Then we show that the Jacobian spectral radius of ${\psi}$ is strictly bounded by one (\Cref{lemma:bound_jacobian_suppid}).
    Proof of \Cref{lemma:bound_jacobian_suppid} relies on \Cref{lemma:M_sym_def_pos,lemma:norm_Bj,lemma:proj_Bj,lemma:ortho_xj}.
    \item  Finally we conclude by a seconder order Taylor expansion of the fixed-point operator $\psi$.
    % to local linear convergence by applying \citealt[Theorem 1, Section 2.1.2]{Polyak1987}.
\end{itemize}

\begin{lemma}[Differentiability of the fixed-point operator]\label{lemma:diff_prox}
    The fixed-point operator ${\psi}$ is
    twice
    differentiable at $\hat \beta_{\cS}$ with Jacobian:
    %%%%%%%%%%%%%%%%%%%%%%%%%%%%%%%%%%%%%%%%%%
    \begin{align}
        \cJ{\psi}(\hat \beta_{\cS})
        & =
        M^{-1/2}
        \underbrace{
            (\Id - B^{(|\cS|)}) \hdots (\Id - B^{(1)})
        }_{\eqdef A} M^{1/2}
        \enspace ,
        \label{eq:formula_fixed_point_jac}
    \end{align}
    %%%%%%%%%%%%%%%%%%%%%%%%%%%%%%%%%%%%%%%%%%
    with
    %%%%%%%%%%%%%%%%%%%%%%%%%%%%%%%%%%%%%%%%%%
    \begin{align}
        M & \eqdef
        \nabla_{\cS, \cS}^{2} f(\hat \beta)
        +
        \nabla_{\cS, \cS}^{2} g(\hat \beta)
        \in \bbR^{|\cS| \times |\cS|} \enspace,
        \text{ and }
        \label{eq:def-mat-M}
        \\
        B^{(s)}
        & \eqdef
        M_{: s}^{1/2}
        \frac{\gamma_{j_s}}{1 + \gamma_{j_s} \nabla^2 g_{j_s}(\hat \beta_{j_s})}
        M_{: s}^{1/2\top}
        \in \bbR^{|\cS| \times |\cS|} \enspace .
        \label{eq:def-mat-Bj}
    \end{align}
    %%%%%%%%%%%%%%%%%%%%%%%%%%%%%%%%%%%%%%%%%%
    % and $\hat z \eqdef \hat \beta - \gamma \odot \nabla f(\hat \beta) \in \bbR^p$.
\end{lemma}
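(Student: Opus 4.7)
The plan is to differentiate the composition $\psi = \cP^{(|\cS|)} \circ \cdots \circ \cP^{(1)}$ at $\hat \beta_{\cS}$ by the chain rule, and then conjugate each factor $\cJ \cP^{(s)}(\hat \beta_{\cS})$ by $M^{1/2}$ so that the resulting product telescopes into the announced form. The derivation splits into (i) a regularity check, (ii) an explicit formula for $\cJ \cP^{(s)}$, and (iii) an algebraic simplification under conjugation by $M^{1/2}$.

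First I would check that each $\cP^{(s)}$ is locally $\mathcal{C}^2$ around $\hat \beta_{\cS}$. By \Cref{ass:locally_c2}, $g_{j_s}$ is $\mathcal{C}^3$ in a neighborhood of $\hat \beta_{j_s}$, and \Cref{ass:alpha_semi_cvx} ensures that $y \mapsto y + \gamma_{j_s} g'_{j_s}(y)$ has strictly positive derivative $1 + \gamma_{j_s} g''_{j_s}(\hat \beta_{j_s}) > 0$ and is thus a local $\mathcal{C}^2$-diffeomorphism. The implicit function theorem then gives that $\prox_{\gamma_{j_s} g_{j_s}}$ is $\mathcal{C}^2$ near $\hat \beta_{j_s} - \gamma_{j_s} \nabla_{j_s} f(\hat \beta)$ with derivative $1 / (1 + \gamma_{j_s} g''_{j_s}(\hat \beta_{j_s}))$. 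Composing with the $\mathcal{C}^3$ map $u \mapsto u_s - \gamma_{j_s} \nabla_{j_s} f(\pi(u))$ (and leaving the other coordinates untouched) shows that $\cP^{(s)}$ is $\mathcal{C}^2$, whence $\psi$ is twice differentiable at $\hat \beta_{\cS}$ by the chain rule.

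Next I would compute $\cJ \cP^{(s)}(\hat \beta_{\cS})$. Since all coordinates $s' \neq s$ are unchanged, the corresponding rows of the Jacobian coincide with those of $\Id$. Writing $\alpha_s = \gamma_{j_s}/(1 + \gamma_{j_s} g''_{j_s}(\hat \beta_{j_s}))$ and $H = \nabla^2_{\cS,\cS} f(\hat \beta)$, the chain rule applied to the prox composition yields row $s$ equal to $\tfrac{1}{1 + \gamma_{j_s} g''_{j_s}(\hat \beta_{j_s})}\bigl(e_s^\top - \gamma_{j_s} H_{s,:}\bigr)$. Separability of $g$ makes $\nabla^2_{\cS,\cS} g(\hat \beta)$ diagonal, so $H_{s,:} = M_{s,:} - g''_{j_s}(\hat \beta_{j_s}) e_s^\top$; plugging this in and simplifying, the row collapses to $e_s^\top - \alpha_s\, e_s^\top M$, hence $\cJ \cP^{(s)}(\hat \beta_{\cS}) = \Id - \alpha_s\, e_s e_s^\top M$.

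Finally, \Cref{ass:local_str_cvx} guarantees $M \succ 0$, so $M^{1/2}$ exists and is invertible. A direct computation gives $M^{1/2}\, \cJ \cP^{(s)}(\hat \beta_{\cS})\, M^{-1/2} = \Id - \alpha_s\, M^{1/2} e_s e_s^\top M^{1/2} = \Id - B^{(s)}$, using $M M^{-1/2} = M^{1/2}$ and $M^{1/2} e_s = M^{1/2}_{:s}$. Injecting this into the chain-rule product $\cJ \psi(\hat \beta_{\cS}) = \cJ \cP^{(|\cS|)}(\hat \beta_{\cS}) \cdots \cJ \cP^{(1)}(\hat \beta_{\cS})$ makes the intermediate $M^{1/2} M^{-1/2}$ factors cancel, leaving exactly \eqref{eq:formula_fixed_point_jac}. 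The main obstacle is step two: the chain rule naturally produces $H$ in the rank-one correction, and one must use the separability identity $M = H + \nabla^2_{\cS,\cS} g(\hat \beta)$ together with the factor $1 + \gamma_{j_s} g''_{j_s}(\hat \beta_{j_s})$ from the prox derivative to substitute $M$ for $H$; once this substitution is performed, the conjugation by $M^{1/2}$ is a clean algebraic step.
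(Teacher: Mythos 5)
Your proposal is correct and follows essentially the same route as the paper's proof: differentiability of the prox via the inverse function theorem (using \Cref{ass:locally_c2} and the positivity of $1 + \gamma_{j_s} g_{j_s}''(\hat \beta_{j_s})$ from \Cref{ass:alpha_semi_cvx}), an explicit computation showing $\cJ \cP^{(s)}(\hat \beta_{\cS}) = \Id - \tfrac{\gamma_{j_s}}{1+\gamma_{j_s} g_{j_s}''(\hat \beta_{j_s})} e_s e_s^\top M$ via the diagonal structure of $\nabla^2_{\cS,\cS} g$, and conjugation by $M^{1/2}$ followed by the chain rule. The only cosmetic difference is that you make the substitution $H_{s,:} = M_{s,:} - g_{j_s}''(\hat \beta_{j_s}) e_s^\top$ explicit where the paper absorbs it into one line of algebra.
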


%%%%%%%%%%%%%%%%%%%%%%%%%%%%%%%%%%%%%%%%%%%%%%%%%%%%%%%%%%%%%%%%%%%%%%%%%%%%%
\begin{proof}[{\Cref{lemma:diff_prox}}]
    Let $j \in \cS$, $\hat \beta_j = \prox_{\gamma_j g_j}(\hat z_j)$,
    since $g_j$ is $\cC^1$ at $\hat \beta_j$ (\Cref{ass:locally_c2}), we have
    %
    % we have $\frac{1}{\gamma_j}(v-u)\in \partial g_j(u)$ becomes
    %
    % \begin{math}
        $z_j  = (\Id + \gamma_j g_j')(\hat \beta_j) \eqdef \phi(\hat \beta_j)$, and $\hat \beta_j = \phi^{(-1)}(\hat z_j)$
        \nonumber
    % \end{math}
    Since $j \in \cS$, $g_j$ is of class $\cC^3$ at
    $\hat \beta$ (\Cref{ass:locally_c2}),
    $\phi$ is $\cC^2$ at $\hat \beta_j$.
    Moreover $\phi'(\hat \beta_j) = 1 + \gamma_j g_j(\hat \beta_j) > 0$ (using \Cref{ass:alpha_semi_cvx}).
    Hence the inverse function theorem yields
    $\phi^{(-1)}$ is $\cC^1$ at $ \hat z_j \eqdef \phi(\hat \beta_j)$, and
    \begin{align}
        \prox'(\hat z_j)
        &=
        \phi^{(-1)'}(\hat z_j)
        \\
        &=
        \frac{1}{\phi'(\phi^{(-1)}(\hat z_j))}
        \\
        &=
        \frac{1}{1 + \gamma_j g_j''(\prox(\hat z_j) ) }
        \\
        &=
        \frac{1}{1 + \gamma_j g_j''(\hat \beta_j)}
        \enspace .
        \label{eq:value_diff_prox}
    \end{align}
    It follows that $\cP^{(s)}$ is $\cC^2$ at $\hat \beta_{\cS}$.
    For all $s \in [|\cS|]$, $\cP^{(s)}$ is $\cC^2$ at $\hat \beta_\cS$.
    In addition, $\cP^{(s)} (\hat \beta_\cS) = \hat \beta_\cS$, thus
    $\psi\eqdef\cP^{(|\cS|)} \circ \hdots \circ\cP^{(1)}$
     is also $\cC^1$ at $\hat \beta_\cS$.
%%%%%%%%%%%%%%%%%%%%%%%%%%%%%%%%%%%%%%%%%%%%%%%%%%%%%%%%%%%
    To compute, the Jacobian of $\cP^{(s)}$ at $\hat \beta_\cS$, let us first notice that
    \begin{align}
        \cJ\cP^{(s)}(\hat \beta_\cS)^\top =
        \left (
            \begin{array}{c|c|c|c|c|c|c}
                e_1
                & \hdots
                & e_{s-1}
                & v_s
                & e_{s+1}
                & \hdots
                & e_{|\cS|}
            \end{array}
        \right) \enspace ,
    \end{align}
    where $v_s = \prox'_{\gamma_{j_s} g_{j_s}}
        \left(
            \hat z_{j_s}
        \right)
        \left(
            e_{j_s} - \gamma_{j_s} \nabla^{2}_{j_s,:} f(\hat \beta)
        \right)$
        and
    $\hat z_j = \hat \beta_j - \gamma_j \nabla_j f(\hat \beta)$.
    This matrix can be rewritten
    \begin{align}
        \cJ{\cP^{(s)}}(\hat \beta_\cS)
        &= \Id_{|S|} - e_s e_s^{\top}
        +  \prox'_{\gamma_{j_s} g_{j_s}}
            \left(\hat z_{j_s}\right)
            \left (
                e_s e_s^\top - \gamma_{j_s} e_s e_s^\top \nabla^{2}f(\hat \beta)
            \right )
            \nonumber \\
        &= \Id_{|S|} - e_se_s^{\top} \frac{\gamma_{j_s} }{1 + \gamma_{j_s}g_j''(\hat \beta_{j_s})}
            \left (
                \nabla_{\cS, \cS} g(\hat \beta)
                + \nabla_{\cS, \cS}^{2}f(\hat \beta)
            \right)
            \nonumber \\
        &= \Id_{|S|} - e_se_s^{\top} \frac{\gamma_{j_s} }{1 + \gamma_{j_s}g_j''(\hat \beta_{j_s})}
             M
            \nonumber \\
        &= M^{-1/2}
        \left(\Id_{|S|} - M^{1/2} e_s e_s^{\top}
        \frac{\gamma_{j_s} }{1 + \gamma_{j_s}g_j''(\hat \beta_{j_s})}
         M^{1/2}\right)
        M^{1/2}
        \nonumber \\
        & = M^{-1/2}
            \left(
                \Id_{|S|} - B^{(s)}
            \right)M^{1/2}
            \enspace ,
        \nonumber
    \end{align}
    where
    \begin{align}
        M \eqdef
        \nabla_{\cS, \cS}^{2} f(\hat \beta)
        +  \nabla_{\cS, \cS}^{2} g(\hat \beta)
        \in \bbR^{|\cS| \times |\cS|}
        \enspace ,
    \end{align}
    and
    \begin{align}
        B^{(s)}
        \eqdef
        M^{1/2}_{: s}
        \frac{\gamma_{j_s} }{1 + \gamma_{j_s}g_j''(\hat \beta_{j_s})}
        M_{: s}^{1/2\top}
        \in \bbR^{|\cS| \times |\cS|}
        .
        \nonumber
    \end{align}
    The chain rule yields
    \begin{align}
        \cJ{\psi}(\hat \beta_{\cS})
        & =
        \cJ\mathcal{P}^{(|\cS|)}(\hat \beta_{\cS})
        \cJ\mathcal{P}^{(|\cS|-1)}(\hat \beta_{\cS})\hdots
        \cJ\mathcal{P}^{(1)}(\hat \beta_{\cS})
        \nonumber
        \\
        & =
        M^{-1/2}
            \underbrace{
                (\Id - B^{(|\cS|) })
                (\Id - B^{(|\cS| - 1) })
                \hdots
                (\Id - B^{(1)})
            }_{\eqdef A}
        M^{1/2}
        \nonumber
        \enspace .
    \end{align}

\end{proof}
%%%%%%%%%%%%%%%%%%%%%%%%%%%%%%%%%%%%%%%%%%%%%%%%%%%%%%%%%%%%%%%%%%%%%%%%%%%%%
The following lemmas (\Cref{lemma:M_sym_def_pos,lemma:norm_Bj,lemma:proj_Bj,lemma:ortho_xj}) aim at showing that the spectral radius of the Jacobian of the fixed-point operator $\psi$ is strictly bounded by one (\Cref{lemma:bound_jacobian_suppid}).
\begin{lemma}
    \begin{lemmaenum}[topsep=4pt,itemsep=4pt,partopsep=4pt,parsep=4pt]
    %   \item For all $j \in S$,
    %   $\prox'_{\gamma_j g_j }(\hat z_j) = \frac{1}{1 + \gamma_j \nabla^2 g_j(\hat \beta_j)} > 0$,
    %   $\hat z \eqdef \hat \beta - \gamma \odot \nabla f(\hat \beta) \in \bbR^p$.
    %   \label{lemma:diff_prox_z}
      %%%%%%%%%%%%%%%%%%%%%%%%%%%%%%%%%%%%%%%%%%%%%%%%%%%%%%
      \item The matrix $M$ defined in \Cref{eq:def-mat-M} is symmetric definite positive.
      \label{lemma:M_sym_def_pos}
      %%%%%%%%%%%%%%%%%%%%%%%%%%%%%%%%%%%%%%%%%%%%%%%%%%%%%%
      \item For all $s\in [|\cS|]$, the spectral radius of the matrix $B^{(s)}$ defined in \Cref{eq:def-mat-Bj} is bounded by 1, \ie $\normin{B^{(s)}}_2 \leq 1$.
      \label{lemma:norm_Bj}
      %%%%%%%%%%%%%%%%%%%%%%%%%%%%%%%%%%%%%%%%%%%%%%%%%%%%%%
      \item For all $s\in [|\cS|]$, $B^{(s)} / \normin{B^{(s)}}$ is an orthogonal projector onto $\Span(M_{: s}^{1/2})$.
          \label{lemma:proj_Bj}
      %%%%%%%%%%%%%%%%%%%%%%%%%%%%%%%%%%%%%%%%%%%%%%%%%%%%%%
      \item For all $s\in [|\cS|]$ and for all $u \in \bbR^S$,
      if $\normin{(\Id - B^{(s)})u} = \norm{u}$ then
      $u \in \Span(M_{: s}^{1/2})^\perp$
      and
      $(\Id - B^{(s)})u = u$.
      \label{lemma:ortho_xj}
      %%%%%%%%%%%%%%%%%%%%%%%%%%%%%%%%%%%%%%%%%%%%%%%%%%%%%%
      \item The spectral radius of the Jacobian $\cJ{\psi}(\hat \beta_{\cS})$ of the fixed-point operator $\psi$ is bounded by 1
      \begin{align}
          \rho(\cJ{\psi}(\hat \beta_{\cS}))
          <
          1
          \enspace .
      \end{align}
      \label{lemma:bound_jacobian_suppid}
      %%%%%%%%%%%%%%%%%%%%%%%%%%%%%%%%%%%%%%%%%%%%%%%%%%%%%%
    \end{lemmaenum}
  \end{lemma}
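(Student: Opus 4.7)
The plan is to prove the five parts in order since each builds on the preceding ones, with (v) being the main technical assertion.

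For part (i), the claim is immediate: by \Cref{ass:local_str_cvx} the Hessian $M = \nabla^2_{\cS,\cS} f(\hat\beta) + \nabla^2_{\cS,\cS} g(\hat\beta)$ is positive definite, and symmetry of $f$'s and $g$'s Hessians is automatic. For part (ii), I would rewrite $B^{(s)} = c_s v_s v_s^\top$ where $v_s = M^{1/2}_{:,s}$ and $c_s = \gamma_{j_s}/(1 + \gamma_{j_s} g_{j_s}''(\hat\beta_{j_s}))$. Two things need checking. First, the denominator of $c_s$ is strictly positive: $\alpha$-semi-convexity of $g_{j_s}/L_{j_s}$ (\Cref{ass:alpha_semi_cvx}) gives $g_{j_s}''(\hat\beta_{j_s})/L_{j_s} \geq -\alpha$, so $1+\gamma_{j_s} g_{j_s}''(\hat\beta_{j_s}) \geq 1-\alpha > 0$. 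Second, for the rank-one operator $\lVert B^{(s)}\rVert_2 = c_s \lVert v_s\rVert^2 = c_s M_{ss}$, and the coordinate-wise $L_{j_s}$-smoothness of $f$ from \Cref{ass:f} implies $\nabla^2_{j_s,j_s} f(\hat\beta) \leq L_{j_s}$, hence $M_{ss} \leq L_{j_s} + g_{j_s}''(\hat\beta_{j_s})$. Substituting $\gamma_{j_s} = 1/L_{j_s}$ then yields $c_s M_{ss} \leq 1$. Part (iii) follows because a positive rank-one PSD matrix divided by its spectral norm is, by inspection, the orthogonal projector $v_s v_s^\top / \lVert v_s\rVert^2$ onto $\Span(v_s)$.

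For part (iv), decompose $u = u_\parallel + u_\perp$ with $u_\parallel \in \Span(v_s)$. Then $(\Id - B^{(s)})u = u_\perp + (1 - c_s\lVert v_s\rVert^2)\, u_\parallel$, so
\begin{equation*}
    \lVert (\Id - B^{(s)}) u\rVert^2 = \lVert u_\perp\rVert^2 + (1-c_s\lVert v_s\rVert^2)^2 \lVert u_\parallel\rVert^2.
\end{equation*}
Since part (ii) gives $c_s\lVert v_s\rVert^2 \in (0, 1]$ (strict positivity uses $c_s > 0$ and invertibility of $M^{1/2}$ from (i)), the factor $(1-c_s\lVert v_s\rVert^2)^2 < 1$ unless $u_\parallel = 0$; norm preservation therefore forces $u_\parallel = 0$, and consequently $(\Id - B^{(s)})u = u_\perp = u$.

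For part (v), the similarity $\cJ\psi(\hat\beta_\cS) = M^{-1/2} A M^{1/2}$ with $A = (\Id - B^{(|\cS|)})\cdots(\Id - B^{(1)})$ implies $\rho(\cJ\psi(\hat\beta_\cS)) = \rho(A)$, so it suffices to show $\rho(A) < 1$. Suppose by contradiction that $\rho(A) = 1$, giving a (possibly complex) eigenvalue $\lambda$, $\lvert\lambda\rvert = 1$, and eigenvector $u \neq 0$. Since each $\lVert\Id - B^{(s)}\rVert_2 \leq 1$ by (ii), the chain of inequalities
\begin{equation*}
    \lVert u\rVert = \lvert\lambda\rvert\,\lVert u\rVert = \lVert A u\rVert \leq \lVert u\rVert
\end{equation*}
forces $\lVert(\Id - B^{(s)}) w_{s-1}\rVert = \lVert w_{s-1}\rVert$ at each intermediate vector $w_{s-1} = (\Id - B^{(s-1)})\cdots(\Id - B^{(1)}) u$. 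Splitting into real and imaginary parts reduces the complex case to the real setting of (iv) applied to each part separately. Iterating part (iv) for $s = 1,\dots,|\cS|$, each $w_{s-1}$ is both preserved by $\Id - B^{(s)}$ and orthogonal to $v_s = M^{1/2}_{:,s}$, so in fact $w_{s} = w_{s-1} = u$ for every $s$. Hence $u$ is orthogonal to every column of $M^{1/2}$, but those columns form a basis of $\bbR^{|\cS|}$ by (i), forcing $u = 0$, a contradiction. The main obstacle is precisely this last step: part (iv) must be chained across all coordinate updates, and the potentially complex eigenvector has to be handled via a real/imaginary decomposition before invoking the real-valued argument.
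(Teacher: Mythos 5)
Your proof is correct and follows essentially the same route as the paper: rank-one structure and the bound $\gamma_{j_s}\nabla^2_{j_s,j_s}f\leq 1$ for (ii), norm preservation forcing orthogonality to $M^{1/2}_{:s}$ for (iv), chaining over all $s$ plus full rank of $M^{1/2}$ and similarity for (v). The only cosmetic difference is that in (v) the paper bounds the operator norm $\normin{A}_2<1$ directly (which sidesteps your real/imaginary decomposition of a complex eigenvector), but both arguments are valid.
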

%%%%%%%%%%%%%%%%%%%%%%%%%%%%%%%%%%%%%%%%%%%%%%%%%%%%%%%%%%%%%%%%%%%%%%%%%%%%%
\begin{proof}[{\Cref{lemma:M_sym_def_pos}}]
    Using \Cref{eq:def-mat-M} yields
    \begin{align}
        M &=
        % \nabla_{\cS, \cS}^{2} f(\hat \beta)
        % + \diag(\nabla^2 g_j(\hat \beta_j))_{j \in \cS}
        % \\
        % &=
        \nabla_{\cS, \cS}^{2} f(\hat \beta)
        + \nabla_{\cS, \cS}^2 g (\hat \beta) \succ 0 \quad \text{(using \Cref{ass:local_str_cvx})} \enspace .
    \end{align}
\end{proof}
%%%%%%%%%%%%%%%%%%%%%%%%%%%%%%%%%%%%%%%%%%%%%%%%%%%%%%%%%%%%%%%%%%%%%%%%%%%%%
%%%%%%%%%%%%%%%%%%%%%%%%%%%%%%%%%%%%%%%%%%%%%%%%%%%%%%%%%%%%%%%%%%%%%%%%%%%%%
\begin{proof}[\Cref{lemma:norm_Bj}]
    $B^{(s)}$ is a rank one matrix which is the product of
    $ \dfrac{\gamma_{j_s}}{1
    +
    \gamma_{j_s}\nabla^2 {g_{j_s}} (\beta_{j_s})}
    % \nabla\prox_{\gamma_{j_s}g_j}(\hat z_{j})
    M_{:s}^{1/2}$
    and
    $M_{: s}^{1/2\top}$,
    its non-zeros eigenvalue is thus given by
    \begin{align}
        \normin{B^{(s)}}_2
        &=
        \left |
        M^{1/2\top}_{: s}
        % \gamma_{j_s}\nabla\prox_{\gamma_{j_s} g_{j_s}}(\hat z_j)_s
            \dfrac{\gamma_{j_s}}{1
        +
        \gamma_{j_s}\nabla^2 {g_{j_s}} (\beta_{j_s})}
         M_{:s}^{1/2}
         \right |
        \nonumber \\
        &=
        \left |
            \dfrac{\gamma_{j_s}}{1
        +
        \gamma_{j_s}\nabla^2 {g_{j_s}} (\beta_{j_s})} M_{s, s}
        \right |
        \nonumber \\
        & =
        \left |
        \dfrac{\gamma_{j_s}}{1
        +
        \gamma_{j_s}\nabla^2 {g_{j_s}} (\beta_{j_s})}
        \left ( \nabla_{j_s, j_s}^2 f(\beta) + \nabla^2 {g_{j_s}} (\beta_j)  \right)
        \right | \enspace.
        \nonumber
        \\
        & =
        \dfrac{
           \overbrace{\gamma_{j_s} \nabla_{j_s, j_s}^2 f(\beta)}^{\leq 1}
            +
            \gamma_{j_s}\nabla^2 {g_{j_s}} (\beta_{j_s})
        }{
            1
            +
            \gamma_{j_s}\nabla^2 {g_{j_s}} (\beta_{j_s})
        }
        \leq 1 \enspace.
        \nonumber
    \end{align}
\end{proof}
%%%%%%%%%%%%%%%%%%%%%%%%%%%%%%%%%%%%%%%%%%%%%%%%%%%%%%%%%%%%%%%%%%%%%%%%ùùù
\begin{proof}[\Cref{lemma:ortho_xj}]
    Let $s \in \cS$,
    \begin{align}
        \Id - B^{(s)}
        &=
        \Id - \normin{B^{(s)}} \frac{B^{(s)}}{\normin{B^{(s)}}} \nonumber
        =
        (1 - \normin{B^{(s)}}) \Id
        + \normin{B^{(s)}}_2 \Id
        - \normin{B^{(s)}}_2 \frac{B^{(s)}}{\normin{B^{(s)}}_2} \nonumber \\
        &=
        (1 - \normin{B^{(s)}}) \Id
        + \normin{B^{(s)}}
        \underbrace{\left ( \Id - \frac{B^{(s)}}{\normin{B^{(s)}}_2} \right )}_{\text{projection onto } M_{: s}^{1/2\perp}} \enspace . \label{eq:almost_proj}
    \end{align}
    We will prove \Cref{lemma:ortho_xj} with a proof by absurd.
    Suppose that there exists $u \notin \Span( M_{: s}^{1/2})^\perp$ such that $\normin{(\Id - B^{(s)})u} = \norm{u}$.

    There exists
    $\alpha \neq 0$,
    $u_{ M_{: s}^{1/2\perp}} \in \Span( M_{: s}^{1/2})^\perp$  such that
    \begin{align} \label{eq:decomposition_x}
        u = \alpha M_{: s}^{1/2} + u_{ M_{: s}^{1/2\perp}} \enspace .
    \end{align}
    Combining \Cref{eq:almost_proj,eq:decomposition_x} yields:
    \begin{align}
        (\Id - B^{(s)}) u
        &=
        (1 - \normin{B^{(s)}}_2) u
        + \normin{B^{(s)}}_2 u_{ M_{: s}^{1/2\perp}}
        \nonumber\\
        \normin{(\Id - B^{(s)}) u}
        &\leq
        \underbrace{
            |1 - \normin{B^{(s)}}_2|
            }_{= (1 - \normin{B^{(s)}}_2)}
        \normin{u}
        + \normin{B^{(s)}}_2
        \underbrace{
            \normin{u_{ M_{: s}^{1/2\perp}}}
        }_{< \normin{u}}
        \nonumber
         < \normin{u}
        \nonumber
        \enspace ,
    \end{align}
    which contradicts the supposition $\normin{(\Id - B^{(s)})u} = \norm{u}$.
    Thus $u \in \Span( M_{: s}^{1/2})^\perp$
    and $(\Id - B^{(s)}) u = u$.
\end{proof}
%%%%%%%%%%%%%%%%%%%%%%%%ùù
\begin{proof}
    [\Cref{lemma:bound_jacobian_suppid}].
        Let $u \in \bbR^s$ such that
            $\normin{(\Id_{|\cS|} - B^{(|\cS|)}) \dots (\Id_{|\cS|} - B^{(1)})u}
            =
            \norm{u}
            \enspace .$
        Since
        \begin{align}
            \normin{(\Id_{|\cS|} - B^{(|\cS|)}
            \dots
            (\Id_{|\cS|} - B^{(1)})}_2
            \leq
            \underbrace{
                \normin{(\Id_{|\cS|} - B^{(|\cS|)})}_2
                }_{\leq 1}
            \times \dots \times
            \underbrace{\normin{(\Id_{|\cS|} - B^{(1)})}_2}_{\leq 1}
            \nonumber \enspace ,
        \end{align}
        we have for all $j \in \cS$,
        \begin{math}
            \normin{(\Id_{|\cS|} - B^{(s)})u} = \norm{u}
        \end{math}.
        One can thus successively apply \Cref{lemma:ortho_xj} which yields
        \begin{math}
        u
        \in
        \bigcap_{s \in [|\cS|]}
        \Span{ M_{: s}^{1/2}}^\perp \Leftrightarrow
            u \in \Span ( M_{: 1}^{1/2}, \dots, M_{: |\cS|}^{1/2} )^\perp \nonumber \enspace .
        \end{math}
        Moreover $M^{1/2}$ has full rank, thus $u=0$ and
            $\normin{(\Id_{|\cS|} - B^{(|\cS|)}) \dots (\Id_{|\cS|} - B^{(1)})}_2 < 1 \nonumber \enspace .$
    From \Cref{lemma:bound_jacobian_suppid}, $\normin{A}_2<1$. Moreover $A$ and
    $\cJ{\psi}(\hat \beta_{\cS})$ are similar matrices (\Cref{eq:formula_fixed_point_jac}), then
    $\rho(\cJ{\psi}(\hat \beta_{\cS})) = \rho(A) < 1$.
    \end{proof}
    % %
    % Then all conditions are met for a second order Taylor
    % Then all conditions (\Cref{lemma:diff_prox,lemma:bound_jacobian_suppid}) are met to apply
    % \cite[Theorem 1, Section 2.1.2]{Polyak1987} which proves the local linear convergence.
\end{proof}

%%%%%%%%%%%%%%%%%%%%%%%%%%%%%%%%%%%%%%%%%%%%%%%%%%%
\subsection{Local acceleration (\Cref{prop:acc_lin_conv})}\label{app:local_acc}
%%%%%%%%%%%%%%%%%%%%%%%%%%%%%%%%%%%%%%%%%%%%%%%%%%%
\begin{proof}
    Since \Cref{ass:f,ass:g,ass:non_empty,ass:non_degeneracy} hold, coordinate descent achieves finite time support identification (\Cref{prop:finite_identification}): there exists $l_0$ such that for all $l \geq l_0$,  $\beta^{(l_0 M + 1)}, \dots, \beta^{(l_0 M + M)}$ shares the support of $\hat \beta$.
    Since Anderson acceleration  linearly combines iterates from $\beta^{(l_0 M + 1)}$ to $\beta^{(l_0 M + M)}$, it preserves the finite time identification property.

In addition, since \Cref{ass:non_degeneracy,ass:locally_c2,ass:local_str_cvx} hold, and the functions $f$ and $g_j$, $j \in [p]$ are piecewise quadratic (by hypothesis),
then \Cref{prop:dl_fixed_point} yields that there exists $K \in \bbN$ such that, for all $k \geq K$:
\begin{align}
    \beta_{\cS^c}^{(k)}
    &= \hat \beta_{\cS^c}
    \\
    \beta_{\cS}^{(k+1)}
    -
    \hat \beta_{\cS}
    &=
    \cJ \psi (\hat \beta_{\cS})
    (\beta_{\cS}^{(k)} -
    \hat \beta_{\cS})
    \enspace .
\end{align}
If the coordinate descent indices are picked from $1$ to $p$ and then form $p$ to $1$, then
\begin{align}
    T  \eqdef \cJ{\psi}(\hat \beta_{\cS})
    & =
    M^{-1/2}
        (\Id - B^{(1)}) \hdots (\Id - B^{(|\cS|)})
        (\Id - B^{(|\cS|)}) \hdots (\Id - B^{(1)})
    M^{1/2}
    \enspace.
    \label{eq:formula_fixed_point_jac_sym}
\end{align}
Based on \Cref{eq:formula_fixed_point_jac_sym} one can apply \citet[Prop. 4]{Bertrand_Massias2020}, which yields
$\rho(T) < 1$ and the iterates of Anderson extrapolation with parameter $M$ enjoy local accelerated convergence rate:
        \begin{align}
            \normin{\beta_{\cS}^{(k-K)} - \hat \beta_{\cS}}_B
            \leq
            \Big(
                \sqrt{\kappa(H)}
                \tfrac{2\zeta^{M-1}}{1 + \zeta^{2(M-1)}} \Big)^{(k-K) / M}
            \normin{\beta_{\cS}^{(K)} - \hat \beta_{\cS}}_B
            \enspace,
        \end{align}
with
$H \eqdef \nabla_{\cS, \cS}^{2} f(\hat \beta) + \nabla_{\cS, \cS}^{2} g(\hat \beta)$,
$\zeta \eqdef (1 - \sqrt{1 - \rho(T)}) / (1 + \sqrt{1- \rho(T)})$,
$B \eqdef ( T - \Id)^\top ( T - \Id)$.
\end{proof}

%%%%%%%%%%%%%%%%%%%%%%%%%%%%%%%%%%%%%%%%%%%%%%%%%%%
\section{Beyond $\alpha$-semi-convex penalties}\label{app:lq}
%%%%%%%%%%%%%%%%%%%%%%%%%%%%%%%%%%%%%%%%%%%%%%%%%%%
\subsection{Proposed score}
When the $g_j$'s are not convex, the distance to the subdifferential can yield uninformative priority scores. This is in particular the case for $\ell_q$-penalties, with $0<q<1$.
\begin{example}\label{ex-05}
    The subdifferential of the  $\ell_{0.5}$-norm at $0$ is:
    $\partial g (0) = \bbR$.
    Hence $0_p$ is a critical point for any $f$.
    For any $\beta$,
    \begin{align}
        \dist(-\nabla_j f (\beta), \partial g_j (0)) = 0
        \enspace ,
    \end{align}
    Thus if $\beta_j = 0$, no matter the value of $\nabla_j f(\beta)$, feature $j$ is always assigned a score of 0,
    which is not relevant to discriminate important features.
\end{example}
% Non-convex $\ell_p$-penalties are
% % of outmost importance
% important
% in sparse machine learning and signal processing \cite{chartrand2007exact,Strohmeier_Bekhti_Haueisen_Gramfort2016,wen2018survey}, it is critical to be able to handle them.
A key observation to improve this rule is that, although $0_p$ is a critical point for any $f$, coordinate descent is able to escape it (\Cref{app:cd_escapes}).
Instead of considering critical point, we consider the more restrictive condition of being a fixed point of proximal coordinate descent:
\begin{equation}
    \hat \beta_j = \prox_{g_j / L_j}
        \left (\hat \beta_j - \frac{1}{L_j} \nabla_j f(\hat \beta)
        \right ) \enspace.
\end{equation}
We propose to rely on the violation of the fixed-point equation:
\begin{equation}\label{eq:score_features_nncvx}
    \mathrm{score}_j^{\mathrm{cd}}
    =
    | \beta_j - \prox_{g_j / L_j}
        (\hat \beta_j - \nabla_j f(\hat \beta) / L_j ) | \enspace.
\end{equation}
This is in a sense a restriction of the optimality conditions, since a fixed point is a critical point (while the converse may not be true).

Because this score only relies on $\nabla f$ and $\prox_{g_j}$, which are known for the overwhelming majority of instances of \Cref{pb:generic_pb}, our working set algorithm can address all of these, while being very simple to implement.
This is in contrast with algorithm relying on duality, or on geometrical interpretations.

\begin{remark}
    Feature importance measures such as $\mathrm{score}_j^\partial$ and $\mathrm{score}_j^{\mathrm{cd}}$ have been considered in the convex case by \citet[Sec. 8]{Nutini_Schmidt_Laradji_Friedlander_Koepke15}, while studying the Gauss-Southwell greedy coordinate descent selection rule.
    However, their approach is to compute the whole score vector \eqref{eq:score_features_nncvx}, which requires a full gradient computation, in order to update a single coordinate: it is not a practical algorithm.
\end{remark}
%%%%%%%%%%%%%%%%%%%%%%%%%%%%%%%%%%%%%%%%%%%%%%%
\subsection{Coordinate descent escapes $0_p$}
\label{app:cd_escapes}
%%%%%%%%%%%%%%%%%%%%%%%%%%%%%%%%%%%%%%%%%%%%%%%%%%%
Let $f$ be a generic function satisfying \Cref{ass:f}.
Suppose that coordinate descent is run on
\begin{equation}
    \min f(\beta) + \lambda \sum_1^p \sqrt{\abs{\beta_j}} \enspace,
\end{equation}
initialized at $0_p$ (a critical point, as seen in \Cref{ex-05}).
We show that if $\lambda$ is low enough, coordinate descent escapes this point.
As coordinate descent is a descent method ($f$ is convex: the objective decreases strictly every time a coordinate's value changes), it is sufficient to show that at least one coordinate is updated.

Let $j = \argmax \abs{\nabla_j f(0_p)}$.
When comes the time for coordinate $j$ to be updated, if some coordinate's value has already changed, coordinate descent has escaped the origin.
Otherwise, since the proximal operator of $x \mapsto  \frac{\lambda}{L_j} \sqrt{x}$ is 0-valued exactly on $[-\frac{3}{2} \left (\frac{\lambda}{L_j} \right)^{2/3}, \frac{3}{2} \left( \frac{\lambda}{L_j} \right)^{2/3}]$ \citep[Table 1]{wen2018survey}, if
\begin{equation}
    \frac{1}{L_j} \abs{\nabla_j f(0_p)} > \frac{3}{2} \left (\frac{\lambda}{L_j} \right)^{2/3}, \enspace
\end{equation}
then the value of $\beta_j$ changes.
Thus, for $\lambda < \left( \frac{2}{3} \frac{\abs{\nabla_j f(0_p)}}{L_j^{1/3}} \right)^{3/2}$, coordinate descent escapes the origin.

%%%%%%%%%%%%%%%%%%%%%%%%%%%%%%%%%%%%%%%%%%%%%%%%%%%%%%%%%%%%%%%%%%%%%%ù
\section{Proximal operator of penalties in the multitask setting}\label{app:prox_multitask}
%%%%%%%%%%%%%%%%%%%%%%%%%%%%%%%%%%%%%%%%%%%%%%%%%%%%%%%%%%%%%%%%%%%%%%ù
In this section we consider a penalty on rows of matrices, i.e. letting $\phi$ be a 1 dimensional penalty, which is even, the whole penalty on $W \in \bbR^{p \times d}$ is
\begin{equation}
    g(W) = \sum_{j=1}^p \phi(\normin{W_{j:}})
\end{equation}
Since this penalty is separable, this brings us to solving:
\begin{equation}\label{eq:prox_radial}
    \argmin_{y \in \bbR^d} \frac{1}{2} \normin{y - x}^2 + \phi(\normin{y}) \enspace.
\end{equation}
\begin{proposition}
    The proximal operator of $y \mapsto \phi(\norm{y})$ is given by:
    \begin{equation}
        \prox_{\phi(\norm{\cdot})} (x) = \prox_{\phi} (\norm{x}) \frac{x}{\norm{x}}
    \end{equation}
\end{proposition}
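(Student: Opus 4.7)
The plan is to reduce the $d$-dimensional problem
\[
    \argmin_{y \in \bbR^d} \tfrac{1}{2} \normin{y - x}^2 + \phi(\normin{y})
\]
to a one-dimensional proximal problem by a ``polar'' decomposition of $y$, exploiting the fact that the objective depends on $y$ only through $\normin{y}$ and $\langle y, x\rangle$.

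First I would write any $y \in \bbR^d$ as $y = t u$ with $t = \normin{y} \geq 0$ and $u$ a unit vector (taking $u$ arbitrary when $t = 0$). Expanding the squared norm gives
\[
    \tfrac{1}{2}\normin{y-x}^2 + \phi(\normin{y})
    = \tfrac{1}{2}t^2 - t\,\langle u, x\rangle + \tfrac{1}{2}\normin{x}^2 + \phi(t).
\]
For fixed $t \geq 0$ only the term $-t\,\langle u, x\rangle$ depends on $u$, and by Cauchy--Schwarz it is minimized (uniquely when $x \neq 0$) by $u^\star = x / \normin{x}$, giving $\langle u^\star, x\rangle = \normin{x}$. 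Substituting back collapses the problem to the scalar one
\[
    \min_{t \geq 0} \tfrac{1}{2}(t - \normin{x})^2 + \phi(t).
\]

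Next I would lift this to an unconstrained scalar problem so that the answer is exactly $\prox_\phi(\normin{x})$. Since $\phi$ is even,
\[
    \min_{t \in \bbR} \tfrac{1}{2}(t - \normin{x})^2 + \phi(t) = \prox_\phi(\normin{x}).
\]
Any minimizer $t^\star$ of this unconstrained problem must satisfy $t^\star \geq 0$: if $t^\star < 0$, then $|t^\star| = -t^\star$ satisfies $\phi(|t^\star|) = \phi(t^\star)$ (evenness) and $(|t^\star| - \normin{x})^2 \leq (t^\star - \normin{x})^2$ since $\normin{x} \geq 0$, with equality only when $\normin{x} = 0$ and then both are minimizers of equal value. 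Hence the non-negativity constraint $t \geq 0$ is inactive and the reduced problem equals $\prox_\phi(\normin{x})$.

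Combining the two steps, when $x \neq 0$ the minimizer is $y^\star = \prox_\phi(\normin{x}) \cdot x/\normin{x}$, which is exactly the claimed formula. The only delicate point — which is essentially the main (mild) obstacle — is the case $x = 0$ and, more generally, uniqueness: for $x \neq 0$ the direction $u^\star$ is unique so $y^\star$ is single-valued whenever $\prox_\phi(\normin{x})$ is; for $x = 0$ the formula is conventionally interpreted as $\prox_\phi(0) \cdot 0 = 0$, which is consistent with the fact that for penalties of interest (MCP, $\ell_q$, SCAD) $0$ minimizes $t \mapsto t^2/2 + \phi(t)$, so $\prox_\phi(0) = 0$ and all directions yield the same minimizer $y^\star = 0$.
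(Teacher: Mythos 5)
Your proof is correct and follows essentially the same route as the paper's: both reduce to the scalar problem $\min_{t\ge 0}\tfrac12(t-\normin{x})^2+\phi(t)$ by showing via Cauchy--Schwarz that the minimizer lies on the ray $\bbR_+ x$, and both use the evenness of $\phi$ to drop the non-negativity constraint and identify the result with $\prox_\phi(\normin{x})$. Your extra care about the $x=0$ case and uniqueness is a minor refinement the paper omits, not a different argument.
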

\begin{proof}
    Notice that the minimum is necessarily attained at a point equal to $tx$, with $t \geq 0$: indeed, for any $y$, $\frac{\norm{y}}{\norm{x}} x$ yields $\phi(\norm{\frac{\norm{y}}{\norm{x}} x}) = \phi(\norm{y})$, and since:
    \begin{align}
        \norm{\frac{\norm{y}}{\norm{x}} x - x}^2 = \norm{y}^2 - 2 \norm{y} \norm{x} + \norm{x}^2 \leq \norm{y}^2 - 2 \langle y, x \rangle + \norm{x}^2 = \norm{y - x}^2 \enspace,
    \end{align}
    it achieves lower objective value in \eqref{eq:prox_radial} than $y$.
    Hence the problem transforms into a 1 dimensional one:
    \begin{align}
        \argmin_{y \in \bbR^d} \frac{1}{2}\norm{y - x}^2 + \phi(\norm{y})
        &= \left(\argmin_{t \geq 0} \frac{1}{2} (t - 1)^2 \norm{x}^2 + \phi(\abs{t} \norm{x})) \right) x \nonumber \\
        &= \left(\argmin_{t \in \bbR} \frac{1}{2} (t - 1)^2 \norm{x}^2 + \phi(t \norm{x})) \right) x  \quad \quad \quad \text{($\phi$ is even)}\nonumber \\
        &= \left(\argmin_{t \in \bbR} \frac{1}{2} (t - \norm{x} )^2 + \phi(t)) \right) \frac{x}{\norm{x}} \nonumber \\
        &= \prox_{\phi} (\norm{x}) \frac{x}{\norm{x}} \enspace.
    \end{align}
\end{proof}

%%%%%%%%%%%%%%%%%%%%%%%%%%%%%%%%%%%%%%%%%%%%%%%%%%%%%%%%%%%%%%%
\section{Supplementary experiments}
\label{app:expe_suppl}
%%%%%%%%%%%%%%%%%%%%%%%%%%%%%%%%%%%%%%%%%%%%%%%%%%%%%%%%%%%%%%%
\subsection{Datasets characteristics}
%%%%%%%%%%%%%%%%%%%%%%%%%%%%%%%%%%%%%%%%%%%%%%%%%%%%%%%%%%%%%%%
%
\begin{table}[H]
    \center
    \caption{Datasets characteristics.}
    \begin{tabular}{ccccc}
        \hline
        Datasets & \#samples $n$ & \#features $p$ & density \\
        \hline
        \emph{rcv1} & \num{20242} & \num{19959} & \num{3.6e-3} \\
        \emph{news20} & \num{19996} & \num{1355191} & \num{3.4e-4} \\
        \emph{finance} & \num{16087} & \num{4272227} & \num{1.4e-3} \\
        \emph{kdda} & \num{8407752} & \num{20216830} & \num{1.8e-6} \\
        \emph{url} & \num{2396130} & \num{3231961} & \num{3.6e-5} \\
        \hline
    \end{tabular}
    \label{table:summary_data}
\end{table}
%%%%%%%%%%%%%%%%%%%%%%%%%%%%%%%%%%%%%%%%%%%%%%%%%%%%%%%%%%%%%%%
%%%%%%%%%%%%%%%%%%%%%%%%%%%%%%%%%%%%%%%%%%%%%%%%%%%%%%%%%%%%%%%
\subsection{ADMM comparison}
%%%%%%%%%%%%%%%%%%%%%%%%%%%%%%%%%%%%%%%%%%%%%%%%%%%%%%%%%%%%%%%
    ADMM can solve a larger range of optimization problems
    than CD \citep[Eq. 3.1]{Boyd_Parikh_Chu2011}.
    Yet, for the Lasso, ADMM requires solving a $p \times p$ linear system at each primal iteration. This is too costly: ADMM is usually not included in Lasso benchmarks (e.g. \citealt{Johnson_Guestrin15}).
    Our algorithm outperforms the implementation of \citet{Poon_Liang2019} as visible on \Cref{fig:duality_gap_admm}.
    \begin{figure}[H]
        \centering
            \centering
            \includegraphics[width=0.4\linewidth]{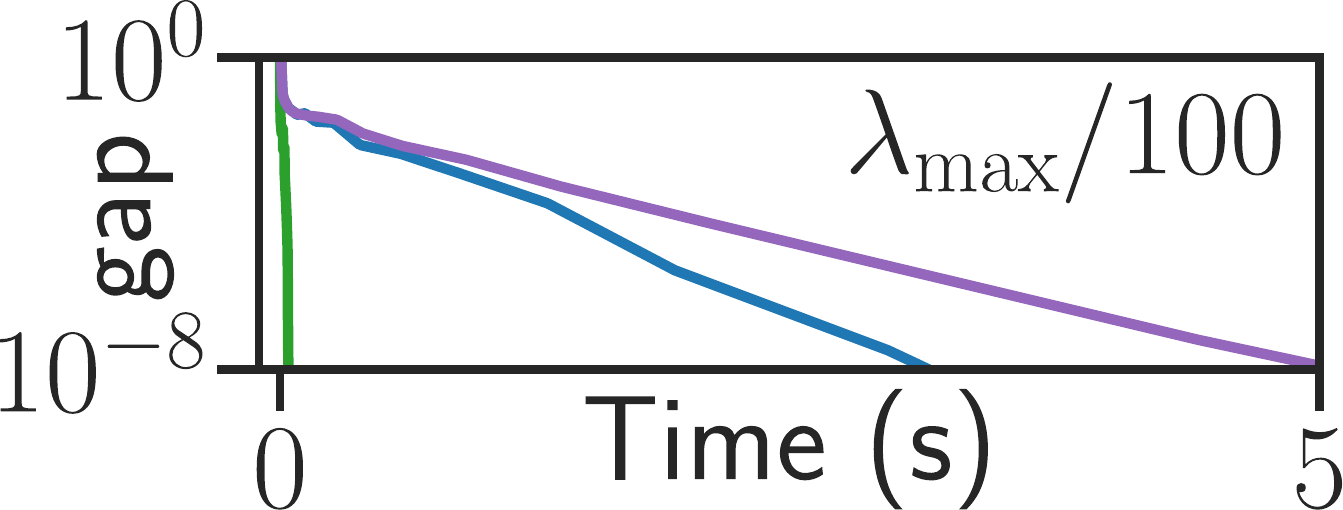}
            % \hfill
            \includegraphics[width=0.2\linewidth]{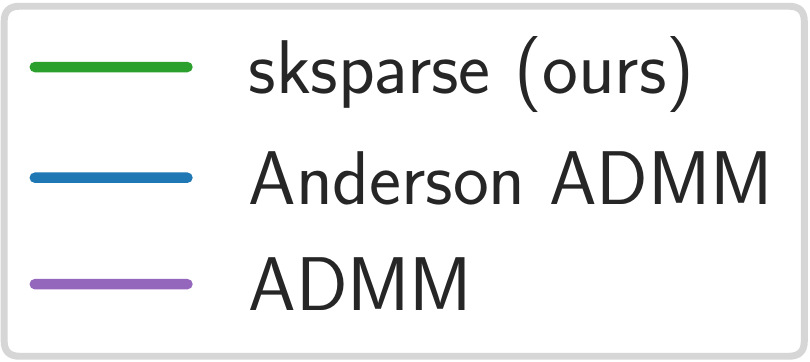}
        \caption{\textbf{ADMM, duality gap.}
        Duality gap as a function of time for the elastic net on a synthetic dataset.}
        \label{fig:duality_gap_admm}
    \end{figure}
%%%%%%%%%%%%%%%%%%%%%%%%%%%%%%%%%%%%%%%%%%%%%%%%%%%%%%%%%%%%%%%
\subsection{\texttt{glmnet} comparison}
%%%%%%%%%%%%%%%%%%%%%%%%%%%%%%%%%%%%%%%%%%%%%%%%%%%%%%%%%%%%%%%
\begin{figure}[H]
    \centering
        \centering
        \includegraphics[width=0.7\linewidth]{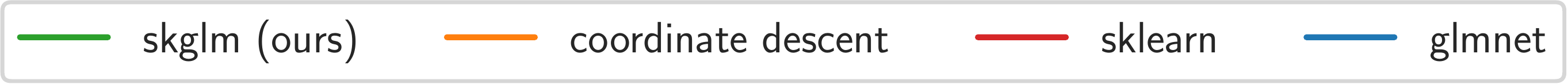}
        \includegraphics[width=\linewidth]{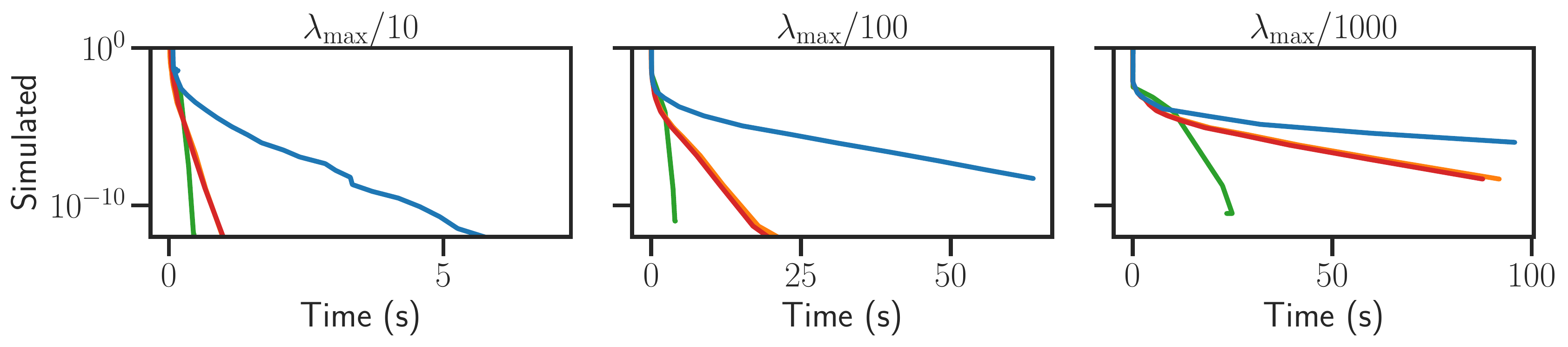}
    \caption{\textbf{Elastic net, duality gap.}
    Duality gap as a function of time for the elastic net on a synthetic dataset, for multiple values of $\lambda$.}
    \label{fig:duality_gap_enet_suppl}
\end{figure}
%%%%%%%%%%%%%%%%%%%%%%%%%%%%%%%%%%%%%%%%%%%%%%%%%%%%%%%%%%%%%%%
\texttt{glmnet} uses a combination of coordinate descent and strong rules to solve the Lasso, elastic net and other L1 + L2 regularized convex problems.
By design of the strong rules \citep{Tibshirani2012}, \texttt{glmnet} is only usable when a sequence of problems must be solved, with decreasing regularization strength $\lambda$: the so-called homotopy/continuation path setting.
In addition, even prompted to solve a given path, the implementation of \texttt{glmnet} does no go up to the smallest $\lambda$ if some statistical criterion stops improving from one $\lambda$ to the other.
Thus, in practice it is nearly impossible to get \texttt{glmnet} to solve a single instance of \Cref{pb:generic_pb} for a given value of $\lambda$.
%
%%%%%%%%%%%%%%%%%%%%%%%%%%%%%%%%%%%%%%%%%%%%%%%%%%%%%%%%%%%%%%%
\subsection{Benchmark on SVM}\label{app:expe_svm}
%%%%%%%%%%%%%%%%%%%%%%%%%%%%%%%%%%%%%%%%%%%%%%%%%%%%%%%%%%%%%%%
Our proposed algorithm can be used with various datafits and penalties.
The SVM primal optimization problem reads
\begin{equation}\label{eq:primal_svm}
    \argmin_{\beta \in \bbR^p} \frac{1}{2} \normin{\beta}^2 + C\sum_{i=1}^n \max(0, 1 - y_iX_{i:}^\top\beta )
    \enspace .
\end{equation}
The dual of \Cref{eq:primal_svm} falls in the framework encompassed by our algorithm, it writes:
\begin{align}\label{pb:dual_svm_one}
    \argmin_{\alpha \in \mathbb{R}^n} & \frac{1}{2} \alpha^\top Q \alpha - \sum_{i=1}^n \alpha_i \\ \nonumber
    &\text{s.t.} 0\leq \alpha_i \leq C \enspace ,
\end{align}
where $Q_{ij} = y_iy_j X_{i:}^\top X_{j:}$.
The datafit is then a quadratic function which we seek to minimize subject to the constraints that $\alpha_i \in [0, C]$.
\Cref{pb:dual_svm_one} is equivalent to the minimization of the following problem:
\begin{problem}\label{pb:dual_svm}
    \argmin_{\alpha\in \bbR^n}\alpha^\top Q \alpha - \sum_{i=1}^n \alpha_i + \iota_{[0, C]}(\alpha_i),
\end{problem}
where $\iota_{[0, C]}$ is the indicator function of the set $[0, C]$.
We also have that the equation link between \Cref{eq:primal_svm} and \Cref{pb:dual_svm} is given by
\begin{align}
    \beta = \sum_{i=1}^n y_i \alpha_i X_{i:} \enspace .
\end{align}

We solved \Cref{pb:dual_svm} on the real dataset \emph{real-sim}.
We compared our algorithm with a coordinate descent approach (CD), the \texttt{scikit-learn} solver based on \texttt{liblinear}, the l-BFGS \citep{Liu_Nocedal1989} algorithm, \texttt{lightning}, and the proposed algorithm (\texttt{skglm}).
\Cref{fig:svm_expe} shows the suboptimality objective value as a function of the time for the different solvers.
The optimization problem was solved for three different regularization values controlled by the parameter $C$ which was set to $0.1$, $1.0$ and $10.0$.
As \Cref{fig:svm_expe} illustrates our algorithm is faster than its counter parts. The difference is larger as the optimization problem is more difficult to solve \ie when $C$ gets large.
%%%%%%%%%%%%%%%%%%%%%%%%%%%%%%%%%%%%%%%%%%%%%%%%%%%%%%%%%%%%%%%
\begin{figure}[H]
    \centering
        \centering
        \includegraphics[width=1\linewidth]{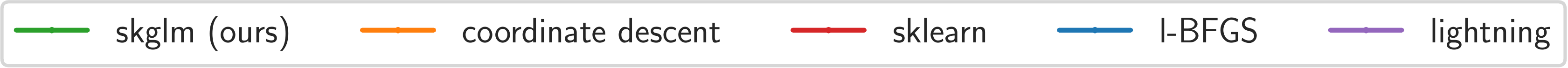}
        \includegraphics[width=\linewidth]{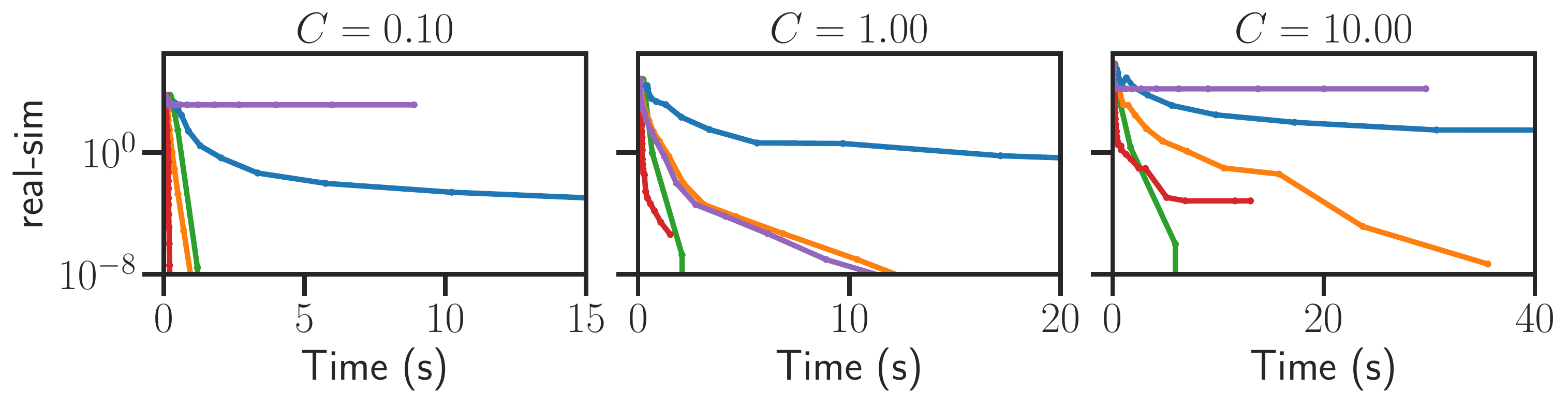}
    \caption{\textbf{SVM, Suboptimality.} Suboptimality as a function of time for the dual SVM optimization \Cref{pb:dual_svm}.
    }
    \label{fig:svm_expe}
\end{figure}
%%%%%%%%%%%%%%%%%%%%%%%%%%%%%%%%%%%%%%%%%%%%%%%%%%%%%%%%%%%%%%%

\subsection{Sparse recovery with non-convex penalties}
\label{app:expe_intro_fig}
In \Cref{fig:sparse_recovery}, to demonstrate the versatility of our approach, we provide a short benchmark of sparse regression, using convex and non-convex penalties.
The data is simulated, with $n=1000$ samples and $p=2000$ features with correlation between features $j$ and $j'$ equal to $0.6^{\abs{j - j'}}$.
The true regression vector $\beta^*$ has $200$ non zero entries, equal to 1.
The observations are equal to $y = X\beta^* + \varepsilon$ where $\varepsilon$ is centered Gaussian noise with variance such that $\normin{X\beta^*} / \normin{\varepsilon} = 5$.
On \Cref{fig:sparse_recovery} we show the regularization path (value of solution found for ever $\lambda$ computed with our algorithm).
Note that despite convergence being only guaranteed towards a local minima, the performance of non-convex estimators is still far better than the global minimizer of the Lasso.
We see that the non-convex penalties are better at recovering the support.
The time to compute the regularization paths is similar for the 4 models, around 1 s.
Thanks to our flexible library, we intend to bring these improvements to practitioners at a large scale.

%%%%%%%%%%%%%%%%%%%%%%%%%%%%%%%%%%%%%%%%%%%%%%%%%%%%%%%%%%%%%
\subsection{Variations in the convergence curves}\label{app:variability}
%%%%%%%%%%%%%%%%%%%%%%%%%%%%%%%%%%%%%%%%%%%%%%%%%%%%%%%%%%%%%
\begin{figure}[h]
    \centering
    \includegraphics[width=0.6\linewidth]{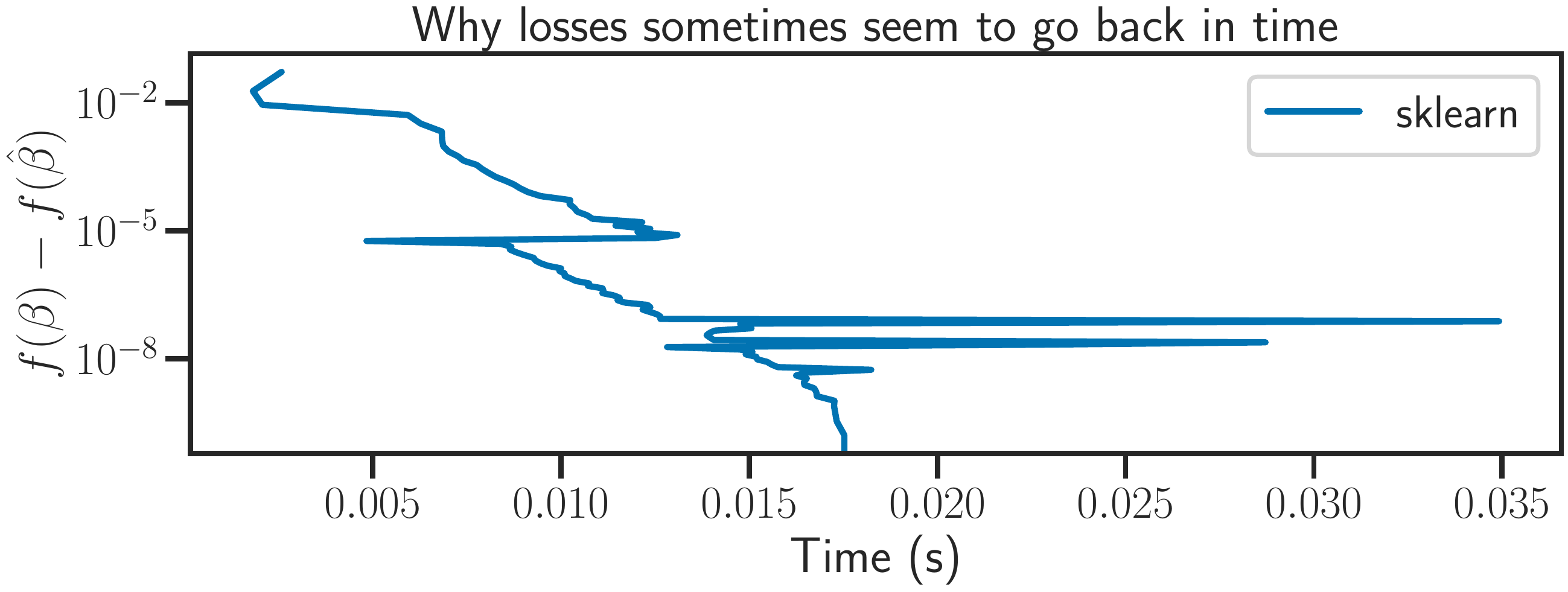}
    \caption{Typical curve aspect caused by variations in solver running time from one run to another (\texttt{scikit-learn}, Lasso problem).}
    \label{fig:non_monotonous_curves}
\end{figure}

By design of the \texttt{benchopt} library that we used for reproducible experiments, solvers are treated as black boxes, for which one only controls the number of iteration performed.
It is thus not possible to monitor the time and losses in a single run of a given solver.
Instead, the solver is run for 1 iteration, then 2 (starting again from 0), then 3, etc.
This allows to obtain a convergence curve for a solver without interfering with its inner mechanisms.
One drawback is that, because of variability in code execution time, it may happen that the run with $K+1$ iterations takes less time than the run with $K$ iterations, for example in \Cref{fig:non_monotonous_curves} -- although it performs more iterations and thus usually decreases the objective more.
Then, the curves seem to go back in time.
The variability can be damped by running the experiment several times and averaging the results, which we did when the total running time allowed it.
Otherwise, these variations should indicate that, as all measurements, convergence curves as a function of time are noisy.

\end{document}